\def\RR{\mathbb R}
\def\NN{\mathbb N}
\def\cA{\mathcal A}
\def\cC{\mathcal C}
\def\cG{\mathcal G}
\def\cJ{\mathcal J}
\def\cL{\mathcal L}
\def\cM{\mathcal M}
\def\cN{\mathcal N}
\def\cS{\mathcal S}
\def\cR{\mathcal R}
\def\ba{\mathbf a}
\def\bzero{\mathbf{0}}
\newcommand{\OPT}{\ensuremath{\textsc{Opt}}}
\newcommand{\raf}[1]{(\ref{#1})}
\newcommand*\Bell{\ensuremath{\boldsymbol\ell}}
\newtheorem{theorem}{Theorem}[section]
\newtheorem{lemma}[theorem]{Lemma}
\newtheorem{corollary}[theorem]{Corollary}
\newtheorem{definition}[theorem]{Definition}
\newtheorem*{remark*}{Remark}
\title{A Fully Polynomial Time Approximation Scheme for Constrained MDPs and Stochastic Shortest Path under Local Transitions}
\author{Majid Khonji\thanks{EECS Department,  Khalifa University, Abu Dhabi, UAE (email: \texttt{majid.khonji@ku.ac.ae}).}}
\begin{document}

\maketitle
\begin{abstract}
The fixed-horizon constrained Markov Decision Process (C-MDP) is a well-known model for planning in stochastic environments under operating constraints. Chance-Constrained MDP (CC-MDP) is a variant that allows bounding the probability of constraint violation, which is desired in many safety-critical applications. CC-MDP can also model a class of MDPs, called Stochastic Shortest Path (SSP), under dead-ends, where there is a trade-off between the probability-to-goal and cost-to-goal.
This work studies the structure of (C)C-MDP, particularly an important variant that involves local transition. In this variant, the state reachability exhibits a certain degree of locality and independence from the remaining states. More precisely, the number of states, at a given time, that share some reachable future states is always constant. (C)C-MDP under local transition is NP-Hard even for a planning horizon of two. In this work, we propose a fully polynomial-time approximation scheme for (C)C-MDP that computes (near) optimal deterministic policies. Such an algorithm is among the best approximation algorithm attainable in theory and gives insights into the approximability of constrained MDP and its variants.

\end{abstract}
\vspace{-10pt}
\section{Introduction}
The {\em Markov decision process} (MDP) \cite{howard1960dynamic} is a classical model for planning in uncertain environments. An MDP consists of states, actions, a stochastic transition function, a utility function, and an initial state. A solution of MDP is a policy that maps a state to an action that maximizes the global expected utility.  The {\em  stochastic shortest path} (SSP) \cite{bertsekas1991analysis}  is an MDP with non-negative utility values and involves a set of absorbing goal states. The problem has an interesting structure and can be formulated with a dual linear programming (LP) formulation \cite{d1963probabilistic} that can be interpreted as a minimum cost flow problem. Moreover,  MDPs  admit many heuristics-based algorithms \cite{bonet2003labeled,hansen2001lao} that utilize admissible heuristics to guide the search without exploring the whole state space.

Besides, constrained MDP (C-MDP) \cite{altman1999constrained} provides the means to add mission-critical requirements while optimizing the objective function. Each requirement is formulated as a  {\em budget constraint} imposed by a non-replenishable resource for which a bounded quantity is available during the entire plan execution. Resource consumption at each time step reduces the resource availability during subsequent time steps (see ~\cite{de2021constrained} for a detailed discussion).
A stochastic policy of C-MDP is attainable using several efficient algorithms (e.g., \cite{feinberg1996constrained}).  A heuristics-based search approach in the dual LP can further improve the running time for large state spaces \cite{trevizan2016heuristic}. For deterministic policies, however, it is known that C-MDP is NP-Hard for the finite-horizon case \cite{khonji2019approximability} (even when the planning horizon is only 2). The problem is also NP-Hard for the discounted infinite-horizon case~\cite{feinberg2000constrained}.

A special type of constraint occurs when we want to bound the probability of constraint violations by some threshold $\Delta$, which is often called a {\em chance constrained} MDP (CC-MDP). To simplify the problem,  \cite{dolgov2003approximating} proposes approximating the constraint using Markov's inequality, which converts the problem to C-MDP. Another approach by \cite{de2017bounding} applies Hoeffding's inequality on the sum of independent random variables to improve the bound. Both methods provide conservative policies that respect safety thresholds at the expense of the objective value (which could be arbitrarily worse than optimal).

In the partially observable setting, the problem is called  chance-constrained partially observable MDP (CC-POMDP). Several algorithms address CC-POMDP under risk constraints~\cite{santana2016rao,khonji2019approximability}. However, due to partial observability, these methods require an enumeration of histories, making the solution space exponentially large with respect to the planning horizon. To speed up the computation, \cite{sungkweon2021ccssp} provides an {\em anytime} algorithm using a Lagrangian relaxation method for CC-MDP and CC-POMDP that returns feasible sub-optimal solutions and gradually improves the solution's optimality when sufficient time is permitted. Unfortunately, the solution space is represented as an And-Or tree of all possible history trajectories, causing the algorithm to slow down as we increase the planning horizon.

The constrained MDP has a wide range of applications in AI and robotics.
One application of CC-MDP is navigation in a discretized environment (e.g., space exploration using a rover \cite{ono2015chance}). Some states (or grid coordinates) cause the agent to fail, say a cliff. The goal is to maximize utility (or science discovery) while avoiding dangerous states with a probability of $1-\Delta$. More applications for space landing and exploration are presented in \cite{ono2015chance}.
Another application of CC-MDP  is behavior planning for autonomous vehicles (AVs), which has been extensively studied in deterministic environments (see, e.g., \cite{wei2014behavioral}). One of the primary sources of uncertainty arises from drivers' intentions, i.e., potential maneuvers of agent vehicles \cite{huang2019online}. An effective behavior planner should optimize the maneuvers, say, minimize total commute time while bounding collision probability below some threshold. The objective of CC-MDP is to minimize the expected compute time, and the chance constraint is to bound the probability of collision. See \cite{khonji2021dual} for more empirical details.  
One application for C-MDP is a battery-operated unmanned aerial vehicle (UAV). The vehicle's goal is to maximize surveillance coverage, while the constraint is to keep energy consumption below battery capacity. See \cite{altman1999constrained} for a  list of constrained MDP applications\footnote{most of the C-MDP applications require non-negative parameters (costs and reward).}.
We refer to \cite{ding2020natural}, \cite{ahmadi2021constrained} for a more recent list of constrained MDP applications.


 In this work, we study a variant of (C)C-MDP in which the reachable set of states from a given  state intersects with at most a constant number  of reachable sets from any other states, denoted as (C)C-MDP under {\em local transitions}. This variant captures a class of MDP problems where state reachability exhibit a certain degree of {\em locality} such that only a constant number of states at a given time can share future states. 
The main contribution of this paper is a {\em fully polynomial time approximation scheme} (FPTAS) that computes (near) optimal deterministic policies for finite-horizon (C)C-MDP under local transitions in polynomial time. Since (C)C-MDP is shown to be NP-Hard (even under local transitions assumption \cite{khonji2019approximability}), our result is among the best possible approximation algorithm attainable in theory.

\vspace{-10pt}
\section{Problem Definition}
In this section, we provide a formal problem definition and relevant background.
\subsection{C-MDP.}
	A fixed-horizon constrained Markov decision process (C-MDP) is a tuple  \mbox{$ M= \langle \cS, \cA, T, U, s_0, h,C, P\rangle$}, 
		where $\cS$ and $\cA$ are finite sets of discrete {states} and {actions}, respectively;
		$T: \cS \times\cA\times \cS \rightarrow [0,1]$ is a probabilistic {transition function} between  states, 
		$T(s,a,s') = \Pr(s' \mid a,s)$, where $s,s' \in \cS$ and $a\in \cA$;
        $U: \cS \times \cA\rightarrow \RR_+$ is a non-negative {utility function};	
        $s_0$ is an initial state;
        $h$ is the planning horizon; 
        $C: \cS \times \cA\rightarrow \RR_+$ is a non-negative {cost function}; 	
        $P\in \RR_+$ is a positive upper bound on the cost.

A {\em deterministic} policy $\pi(\cdot,\cdot)$ is a function that maps a state and time step into action, $\pi:\cS\times \{0,1,...,h-1\} \rightarrow \cA$.  For simplicity, we write $\pi(s_k)$ to denote $\pi(s_k,k)$. A {\em run} is a sequence of random states  $S_0, S_1,\ldots, S_{h-1}, S_h$ that result from executing a policy, where $S_0 = s_0$ is known. 
The objective is to compute a policy that maximizes (resp. minimizes) the expected utility (resp. cost)  while satisfying  the constraint.
More formally,
\begin{align}
	\textsc{(C-MDP)}\quad&\quad\max_{\pi} \mathbb{E}\Big[\sum_{\mathclap{k=0}}^{h-1} U(S_{k}, \pi(S_{k}) )  \Big] \label{obj1}\\
	\text{Subject to }\quad &  \mathbb{E} \Big[\sum_{k=0}^{h-1} C(S_k, \pi(S_k))\mid \pi \Big] \le \ P.\notag
\end{align}

The MDP problem and its constrained variants can be visualized by a direct acyclic And-Or graph (DAG) $\cG$, where the vertices represent the states and  actions.
Thus, at depth $k$ the set of state nodes are the states that are reachable from previous actions at depth $k-1$, denoted as $\cS_{k}\subseteq \cS$. At each depth, we have at most $|\cS|$ states. Fig.~\ref{fig:MDP_graph} provides a pictorial illustration of the MDP And-Or (search) graph. Note that unlike And-Or search trees obtained by history enumeration algorithms (see, e.g., \cite{sungkweon2021ccssp}), with such representation, a node may have multiple parents, leading to a significant reduction in the search space. 
 \begin{figure}[ht]
     \centering 
     \vspace{-10pt}
     \includegraphics[scale=.7]{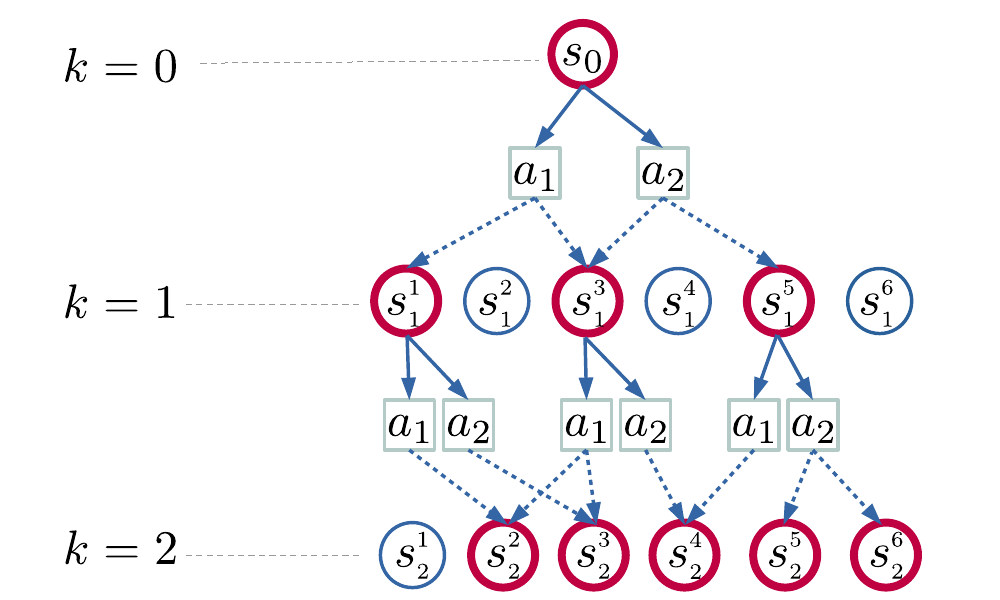}\vspace{-5pt}
     \caption{MDP graph where circles are state nodes and squares are actions. Circles with thick borders represent reachable states at time $k$, denoted as $\cS_k$.}\vspace{-10pt}
     \label{fig:MDP_graph}
 \end{figure}

The objective function and the constraint's left-hand side can be written recursively using the Bellman equation as 
\begin{align*}
v_\pi(s_k) &:= \sum_{\mathclap{s_{k+1} \in \cS_{k+1}}} T(s_k, \pi(s_k), s_{k+1}) v_\pi(s_{k+1}) + U(s_k, \pi(s_k)),\\  
c_\pi(s_k) &:= \sum_{\mathclap{s_{k+1} \in \cS_{k+1} }} T(s_k, \pi(s_k), s_{k+1}) c_\pi(s_{k+1}) + C(s_k, \pi(s_k)),
\end{align*}
for $k=0,...,h-1.$

\subsection{CC-MDP.}
A fixed-horizon chance-constrained MDP (CC-MDP) problem is formally defined  as a tuple 
$M=\langle \cS, \cA, T,U, s_0, h, r, \Delta \rangle,$ where $\cS, \cA, T, U, s_0, h$ are defined as in C-MDP, and
\begin{itemize}
	\item $r: \cS \rightarrow [0,1]$ is the probability of failure  at a given state;
	\item $\Delta$ is the corresponding risk budget, a threshold on the probability of failure over the planning horizon.
\end{itemize}
Let $R(s)$ be a Bernoulli random variable that indicates failure at state $s$, such that $R(s)=1$ if and only if $s$ is a risky state and zero otherwise. For simplicity, we write $R(s)$ to denote $R(s)=1$.
The objective of {CC-MDP} is to compute a deterministic policy (or a conditional plan) $\pi$ that maximizes (or minimizes) the cumulative expected utility (or cost) while bounding the probability of failure at {\em any} time step throughout the planning horizon. More precisely,
\begin{align}
	\textsc{(CC-MDP)}\quad&\quad\max_{\pi} \mathbb{E}\Big[\sum_{\mathclap{k=0}}^{h-1} U(S_{k}, \pi(S_{k}) )  \Big] \\
	\text{Subject to }\quad &  \Pr\Big(\bigvee_{k=0}^{h} R(S_k) \mid \pi \Big) \le \Delta. \label{con1}
\end{align}

To better understand Cons.~\raf{con1}, define the {\em execution risk} of a run at state $s_k$  as 
$$\textsc{Er}_{\pi}(s_k) :=  \Pr\Big(\bigvee_{k'=k}^{h} R(S_{k'}) \mid S_k = s_k\Big).$$ 
According to the definition, Cons.~\raf{con1} is equivalent to $\textsc{Er}_\pi(s_0) \le \Delta$. The lemma below shows that such constraint can be computed recursively. 

\begin{lemma}[\cite{khonji2021dual}]\label{lem:rec} 
	The execution risk of policy $\pi$ can be written as 
	\begin{equation*}
	\footnotesize
    \textsc{Er}_{\pi}(s_k)  = 
	    \left\{\begin{array}{l}
	     r(s_k)+(1-r(s_k)) \\
         \hspace{35pt}\displaystyle\sum_{\mathclap{s_{k+1}\in \cS}}\textsc{Er}_{\pi}(s_{k+1}) \pi(s_k)  T(s_k, a, s_{k+1}), \vspace{-10pt}\\
         
	     \hfill\text{if } k=0,...,h-1, \\
          r(s_h),\hfill \text{if } k=h.
	    \end{array}\right.
	\end{equation*}
\end{lemma}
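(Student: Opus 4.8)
The plan is to prove the two cases by a last-step decomposition of the failure disjunction together with the Markov property of the state process induced by $\pi$; the statement is essentially a backward recursion from the horizon, so I would argue for a fixed $k$ assuming the quantity is already understood at $k+1$.

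First I would dispatch the base case $k=h$. By the definition of $\textsc{Er}_\pi$ we have $\textsc{Er}_\pi(s_h) = \Pr(R(S_h)\mid S_h=s_h)$, and since $R(s_h)$ is the Bernoulli failure indicator attached to state $s_h$ with parameter $r(s_h)$, this is immediately $r(s_h)$. For $k<h$, write $A_k := \bigvee_{k'=k}^{h} R(S_{k'})$ for the event that a failure occurs somewhere in $[k,h]$. The key observation is that $A_k$ splits into the two \emph{mutually exclusive} events ``a failure occurs at the current step $k$'' and ``no failure at step $k$, but a failure occurs somewhere in $[k+1,h]$'', i.e. $A_k = R(S_k) \vee (\neg R(S_k) \wedge A_{k+1})$ with the two disjuncts disjoint. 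Conditioning on $S_k=s_k$ and using disjointness to split the probability gives $\textsc{Er}_\pi(s_k) = \Pr(R(S_k)\mid S_k=s_k) + \Pr(\neg R(S_k)\wedge A_{k+1}\mid S_k=s_k)$, and the first term is $r(s_k)$.

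The crux is the second term. Here I would invoke the modeling assumption that, conditioned on the current state $S_k=s_k$, the failure coin $R(S_k)$ is independent of the future trajectory $S_{k+1},\dots,S_h$ and of the failure indicators attached to those states; intuitively, whether the agent fails at $s_k$ carries no information about where the trajectory goes next or whether it fails later. This lets me factor $\Pr(\neg R(S_k)\wedge A_{k+1}\mid S_k=s_k) = (1-r(s_k))\,\Pr(A_{k+1}\mid S_k=s_k)$. I would then expand the remaining probability by conditioning on the next state: summing over $s_{k+1}\in\cS$, using $\Pr(S_{k+1}=s_{k+1}\mid S_k=s_k) = T(s_k,\pi(s_k),s_{k+1})$, and applying the Markov property $\Pr(A_{k+1}\mid S_{k+1}=s_{k+1},S_k=s_k) = \Pr(A_{k+1}\mid S_{k+1}=s_{k+1}) = \textsc{Er}_\pi(s_{k+1})$ yields $\Pr(A_{k+1}\mid S_k=s_k)=\sum_{s_{k+1}\in\cS} T(s_k,\pi(s_k),s_{k+1})\,\textsc{Er}_\pi(s_{k+1})$. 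Combining the two terms gives precisely the claimed recursion, with $a=\pi(s_k)$.

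I expect the main obstacle to be making the independence/factorization step rigorous rather than any computation. Concretely, I must be explicit that the per-state failure indicators $\{R(s)\}_{s\in\cS}$ are mutually independent and independent of the transition randomness, which is exactly what justifies both the disjoint split of $A_k$ and the factorization of $\Pr(\neg R(S_k)\wedge A_{k+1}\mid S_k=s_k)$. Once this assumption is stated cleanly, the disjointness of the two events and the Markov property of the policy-induced chain do all the remaining work, and no delicate estimation is required.
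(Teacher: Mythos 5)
The paper does not prove this lemma at all---it is imported by citation from \cite{khonji2021dual}---so there is no internal proof to compare against; your argument is the standard derivation and it is correct. The last-step decomposition $A_k = R(S_k) \vee (\neg R(S_k) \wedge A_{k+1})$ into disjoint events, the factorization of the second term using conditional independence of the failure coin at $s_k$ from the future, and the expansion over $s_{k+1}$ via the Markov property are exactly the ingredients needed, and you correctly read the (typo-laden) displayed recursion as having transition weight $T(s_k,\pi(s_k),s_{k+1})$ with $a=\pi(s_k)$. One refinement to your stated assumption: mutual independence of the \emph{per-state} family $\{R(s)\}_{s\in\cS}$ is subtly insufficient if a run can revisit a state (e.g.\ a self-loop), since then $R(S_k)$ and $R(S_{k'})$ for $k'>k$ would be the same coin and your factorization of $\Pr(\neg R(S_k)\wedge A_{k+1}\mid S_k=s_k)$ would fail; the clean assumption is that the failure indicator drawn at each \emph{time step} is a fresh Bernoulli with parameter $r(S_{k'})$, independent across time and of the transition randomness. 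This is the intended semantics here---in the time-indexed DAG of Fig.~1, each occurrence of a state at a given depth is a distinct node---and with that wording your proof goes through verbatim.
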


CC-MDP captures a class of MDPs called stochastic shortest path (SSP), where there is a set of absorbing goal states and dead-end states. A measure of  policy feasibility under SSP is to have the probability-to-goal above some threshold $\epsilon$ \cite{freire2019exact}. This problem can be modeled as CC-MDP, where all non-goal states at horizon $h$ are considered risky states. Hence, the probability of failure is set to $\Delta = 1-\epsilon$. SSPs are often defined with infinite horizons. One can reduce SSP to fixed-horizon CC-MDP by successively incrementing the horizon $h$ until a feasible policy is attainable.
\subsection{Assumptions.}
 In this work, we study a variant of (C)C-MDP in which the number of state-action pairs that share subsequent states is bounded.  Such extension is denoted as (C)C-MDP under {\em local transition}. More formally, define the set of potential next states after executing action $a$ from state $s_k \in \cS_k$ for $k=0,...,h-1$ by,
 \begin{align*}
 	N_a(s_k) &:= \{s_{k+1} \mid T(s_k,a,s_{k+1}) > 0, s_{k+1}\in \cS_{k+1} \},\\
 	N_a(s_h) &:= \varnothing.
 \end{align*}Let $\textsc{Reach}(s_k)$ be a set of states in the MDP graph reachable from state $s_k$; more precisely,
 \begin{equation*}
\textsc{Reach}(s_k) := \left\{\begin{array}{l l}
\bigcup_{\substack{s_{k+1}\in N_a(s_k)\\ a \in \cA}} \textsc{Reach}(s_{k+1}) & \text{ if } k < h \\
s_k & \text{ if } k = h.
\end{array}\right. 
 \end{equation*}

\begin{definition}[Local Transition] \label{def:loc}
An MDP graph $\cG$  under local transition satisfies 
$$\big| \{s'_{k} \in \cG \mid \textsc{Reach}(s_k) \cap \textsc{Reach}(s'_{k}) \neq \varnothing\} \big| \le \psi,$$
for any $ s_k \in \cG$, where $\psi$ is a constant\footnote{We mean by a constant that the number is relatively small and is not a function of MDP instance $M$.}.
\end{definition}
 When $\psi = 0$, we call our problem (C)C-MDP under {\em disjoint transition}. The And-Or graph under the disjoint transition assumption is, in fact, an And-Or tree. Such structure helps to easily obtain a dynamic programming structure that is exploited in our algorithms, shown in the next subsections. 
 
 We also assume that all parameters (utility and cost values) are non-negative. Without such an assumption, one can show as in \cite{chau2016truthful} that the problem is inapproximable (i.e., no $\alpha$-approximation algorithm exists unless P=NP). 

\subsection{Benchmark.}
To analyze our algorithms, we rely on the notion of approximation algorithms. 
The subject of approximation algorithms is well-studied in the theoretical computer science community \cite{Vazirani10}. As follows, we define some standard terminology for approximation algorithms.
Consider a maximization problem $\Pi$ with non-negative objective function $f(\cdot)$; let $F$ be a feasible solution to  $\Pi$ and  $F^\star$ be an optimal solution to $\Pi$. $f(F) $ denotes the objective value of $F$. Let $\OPT= f(F^\star)$ be the optimal objective value of  $F^\star$. A common definition of approximate solutions is $\alpha$-approximation, where $\alpha$ characterizes the approximation ratio between the approximate solution and an optimal solution.

\begin{definition}[\cite{Vazirani10}]
	For $\alpha\in[0,1]$, an $\alpha$-approximation to maximization problem $\Pi$  is an algorithm that obtains a feasible solution $F$ for any instance such that $f(F)\ge \alpha \cdot \OPT.$
	\label{def:alpha}
\end{definition}

In particular, {\em fully polynomial-time approximation scheme} (FPTAS) is a $(1-\epsilon)$-approximation algorithm  to a maximization problem, for any $\epsilon>0$.  The running time of a FPTAS is polynomial in the input size and for every fixed $\tfrac 1 \epsilon$.  In other words, FPTAS allows to trade the approximation ratio against the running time.

In the following, we first study the problem under disjoint transition and then extend the result to the local transition case.
\vspace{-15pt}
\section{Algorithm}

For simplicity, we first study a special variant of (C)C-MDP in which actions stochastically lead to a small number of potential states, denoted as (C)C-MDP under {\em limited transition}. 
\begin{definition}[Limited Transition] \label{def:loc}
There exists a constant $\gamma\in \NN_+$ such that 	$N_a(s) \le \gamma$ for all  $a \in \cA, s \in \cS$.
\end{definition}

In the next subsection, we study (C)C-MDP under limited and disjoint transition, whereas in the following subsection, we relax the limited transition assumption. In the last subsection, we present an FPTAS for (C)C-MDP under local transition assumption, which generalizes the former cases.

\vspace{-5pt}
\subsection{FPTAS for (C)C-MDP under Limited and Disjoint Transition}
\label{sec:fptas-limited}

\begin{algorithm}[ht]	
	\footnotesize
	\caption{\texttt{lim-DynMDP}$[M, \epsilon]$}\label{alg}
	\SetKwInOut{Input}{Input}
	\SetKwInOut{Output}{Output}
	\Input{An instance of CC-MDP $M$; a parameter $\epsilon$ for the approximation guarantee}
	\Output{A deterministic policy $\pi$ }
	${\tt DP}_{\textsc{Er}}(s_k,\ell_k) \leftarrow \infty$; ${\tt DP}_{\sc \pi}(s_k,\ell_k) \leftarrow \varnothing$, ${\tt DP}_{\bar \Bell}(s_k,\ell_k) \leftarrow \bzero$,   for $k=0,...,h$;\\
    ${\tt DP}_{\textsc{Er}}(s_h,\ell_h) \leftarrow r(s_h),$ for all $s_h\in \cS_h, \ell_h  \in \cL_k = \{0\}$ \\
    
	\For{$k=h-1,...,0; s_k\in \cS_k; \ell_k \in \cL_k$ }{
		${\tt DP}_{\textsc{Er}}(s_k,\ell_k), {\tt DP}_{\pi}(s_k,\ell_k), {\tt DP}_{\bar \Bell}(s_k,\ell_k)  \leftarrow \texttt{Update}[s_k,\ell_k, \{\texttt{DP}_{\textsc{Er}}(s,\ell)\}_{s\in \cS_{k+1}, \ell \in \cL_{k+1}}] $ \label{alg:update}
	}
	$\pi \leftarrow \texttt{Fetch-Policy}[\texttt{DP}]$

	\Return $\pi$  

\end{algorithm}
\begin{algorithm}[ht] 
	\footnotesize
	\caption{\mbox{\texttt{Update}$\big[s_k,\ell_k, \{\texttt{DP}(s,\ell)\}_{s\in \cS_{k+1}}^{\ell\in \cL_{k+1}}  \big]$}}\label{alg:update-lim}
	$\textsc{Er}(s_k)\leftarrow \infty$\\
	$\textsc{Act} \leftarrow \varnothing$\\
    $\textsc{Alloc} \leftarrow \bzero$\\
	\For{$a \in \cA$}
	{
		
		// Find an allocation $\overline \Bell_{k+1}$ that acheives the minimum execution risk for action $a$ such that the total utility value is at least $\ell_k$ \\
		\For{$\overline \Bell_{k+1} = (\overline \ell^{1}_{k+1}, \overline \ell^{2}_{k+1},...) \in \cL_{k+1}^{|N_a(s_k)|}$ \label{alg:alc}}{
			$ \overline  v_a(s_k, \overline \Bell_{k+1}) \leftarrow \Big \lfloor \frac{1}{L_k} \Big( \sum_{s^i_{k+1} \in N_a(s_k)} T(s_k, a, s^i_{k+1}) \overline \ell^{i}_{k+1} + U(s_k, a) \Big) \Big\rfloor \cdot L_k$ \label{alg:bel1}\\
			\If{$\overline v_a(s_k, \overline \Bell_{k+1}) \ge \ell_k$}{
				$\textsc{Er}_a(s_k, \overline \Bell_{k+1}) := r(s_k)+(1-r(s_k)) \sum_{s^i_{k+1} \in N_a(s_k)} T(s_k, a, s^i_{k+1}) \cdot \texttt{DP}_{\textsc{Er}}(s^i_{k+1},\overline \Bell^i_{k+1})$ \label{alg:er0}\\
				\If{$\textsc{Er}_a(s_k, \overline \Bell_{k+1}) < \textsc{Er}(s_k)$}{ \label{alg:er1}
					$\textsc{Er}(s_k) \leftarrow \textsc{Er}_a(s_k, \overline \Bell_{k+1})$\\
					$\textsc{Act} \leftarrow a$ \label{alg:er2}\\
                    $\textsc{Alloc} \leftarrow \overline \Bell_{k+1}$
                    
				}
			}
		}
	}
	\Return $\textsc{Er}(s_k), \textsc{Act}, \textsc{Alloc}$
\end{algorithm}

\begin{algorithm}[ht]
	\footnotesize
	\caption{\texttt{Fetch-Policy}$[\texttt{DP}]$}\label{alg:policy}

     $\cC_k \leftarrow \varnothing$ for $k=1,...,h-1$ \\
	$\ell_{0} \leftarrow$ Find the maximum $\ell_{0} \in \cL_{0}$ such that ${\tt DP}_{\textsc{Er}}(s_{0},\ell_{0}) \le \Delta$ and  ${\tt DP}_{\textsc{Er}}(s_{0},\ell_{0} + L_{0}) > \Delta$  \label{alg:f1}\\
    $\cC_0 \leftarrow \{(s_0, \ell_0)\}$ \\
    \For{$k=0,...,h-1$}{
    	  \For{$(s^i_k, \overline \ell^i_k) \in \cC_k$}{
                      $\pi(s^i_{k}) \leftarrow {\tt DP}_{\pi}(s^i_{k},\overline\ell_{k}^i)$  \\
                      $\overline \Bell_{k+1}\leftarrow {\tt DP}_{\bar \Bell}(s^i_{k},\overline\ell^i_{k})$            \\
    					$\cC_{k+1} \leftarrow \cC_{k+1} \cup \{(s^i_{k+1}, \overline \ell^i_{k+1})\}_{s^i_{k+1} \in N_{a}(s^i_{k}) }$ where $a=\pi(s^i_{k})$\label{alg:f2} 
           }
	}
	\Return $\pi$
\end{algorithm}

The procedure involves constructing a 2-dimensional dynamic programming table, ${\tt DP}(\cdot,\cdot)$, where each cell ${\tt DP}(s_k,\ell_k)$ corresponds to state $s_k \in \cS_k$, and a discrete utility value $\ell_k$ (which  we will clarify next). Each cell contains three quantities,  ${\tt DP}_{\textsc{Er}}(s_k,\ell_k)\in \RR_+$ which maintains the minimum execution risk from state $s_k$, executing an action  that accrues a total value of at least $\ell_k$;  ${\tt DP}_\pi(s_k,\ell_k)=a$, the corresponding policy action $a$; and ${\tt DP}_{\bar \Bell}(s_k,\ell_k)\in \RR_+^{|N_a(s_k)|}$, a value allocation for subsequent states as we see next.
The main idea behind the algorithm lies in a utility discretization procedure that shrinks the set of possible values at a given state into a manageable number, exploiting the limited  and disjoint transition assumptions. A detailed description is provided in Algorithm {\tt lim-DynMDP} (Alg.~\ref{alg}). The algorithm relies on two subroutines, {\tt Update} (Alg.~\ref{alg:update-lim}) and {\tt Fetch-Policy} (Alg.~\ref{alg:policy}).  The former computes a discretized version of the Bellman equation along with the corresponding execution risk, and the latter recursively extracts the corresponding policy.  Line~\ref{alg:bel1} of {\tt Update} computes a discretized version of the Bellman equation under discretized future rewards, and Line~\ref{alg:er0} recursively computes the execution risk based on Lemma~\ref{lem:rec}. 
The pseudo-code is provided herein is for CC-MDP; however, it is also applicable to C-MDP with minor modifications. Namely, Line~\ref{alg:er0} of {\tt Update} should be replaced by
\begin{align*} 
\textsc{Er}_a(s_k, \overline \Bell_{k+1})\leftarrow& \sum_{\mathclap{s^i_{k+1} \in N_a(s_k)}} T(s_k, a, s^i_{k+1}) \\
&\quad \cdot \texttt{DP}_{\textsc{Er}}(s^i_{k+1}, \overline \Bell^i_{k+1}) + C(s_k, a)
\end{align*}
and $\Delta$  by $P$ in  Line~\ref{alg:f1} of {\tt Fetch-Policy}. 
Thus, all results in this paper apply to C-MDP as well. (In the remaining text, the term {\em execution risk} in the context of C-MDP would refer to the {\em total cost} instead.) 
Let $U_{\max}:=\max_{s\in \cS, a \in \cA} U(s,a)$ be the maximum utility of an action. 
Denote a discrete set of values $\cL_k$ for each time step $k=0,...,h$ as 
\begin{align} \label{eq:defL}
	\cL_k &:= \{0,L_k,2L_k,..., \lfloor \tfrac{U_{\max}\cdot (h-k)}{L_k}\rfloor L_k \}, \qquad\text{where } \notag \\
	 L_k &:=\frac{\epsilon U_{\max}}{(h-k)(\ln h + 1)}. 
\end{align}
Let $\pi$ be a solution returned by {\tt lim-DynMDP}, and $v_\pi(s_k):= \mathbb{E}[\sum_{k'=k}^{h-1} U(S_{k'}, \pi(S_{k'}) ) ]$ be the corresponding value function at state $s_k$. Similarly, denote $\pi^*$ to be an optimal solution, and $v_{\pi^*}(s_k)$  be the corresponding value function. Without loss of generality, assume that $v_{\pi^*}(s_0) \ge U_{\max}$
\footnote{If $U_{\max} = U(s_k,a) > v_{\pi^*}(s_0)$, then any policy that outputs action $a$ at state $s_k$ must be infeasible. Thus, such an action can be  deleted from the set of allowable actions at state $s_k$. Therefore, $U_{\max}$ can be taken as the second largest utility action and so on. The procedure can be performed in polynomial time as follows. Fix a policy $\pi(s_{k}) = a$, and  set the rest $\pi(s_{k'}')= a'_{k'}$ such that action $a'_{k'}$ achieves the minimum execution risk for $k'=h-1,...,0$ (computed recursively using Lemma~\ref{lem:rec}). If the solution is infeasible, repeat the procedure at different $k$. If again infeasible, one can safely drop $U(s_k,a)$, consider the next largest utility, and then repeat the procedure.}.
Define $\overline v_\pi(s_0)$ (resp.,  $\overline v_{\pi^*}(s_0)$) to be a discretized objective value computed recursively by,
\begin{equation}\footnotesize
	\overline v_\pi(s_k) = \bigg\lfloor \tfrac{1}{L_k}\Big(\sum_{\mathclap{s_{k+1} \in \cS_{k+1}}} T(s_k, \pi(s_k), s_{k+1}) \cdot \overline v_\pi(s_{k+1}) + U(s_k, \pi(s_k)\Big)\bigg\rfloor L_k. \label{eq:recd}
\end{equation}
The above equation corresponds to  step~\ref{alg:bel1} of {\tt Update}.

\begin{lemma} \label{lem:opt} 
	Let $\pi$ be a policy obtained by  {\tt lim-DynMDP} and $\pi^*$ be an optimal deterministic policy. The policy $\pi$ is feasible and satisfies 
	 $\overline v_{\pi}(s_0) \ge \overline v_{\pi^*}(s_0)$.
\end{lemma}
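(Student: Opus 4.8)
The plan is to establish the two assertions of the lemma—feasibility of $\pi$ and the value inequality $\overline v_\pi(s_0)\ge \overline v_{\pi^*}(s_0)$—through a pair of backward inductions over the horizon, both leaning on the recursion of Lemma~\ref{lem:rec} and, crucially, on the fact that \emph{disjoint} transitions make the And--Or graph a tree. The tree property is what lets the DP treat the allocation to each child of a state--action as an independent subproblem: every state $s_{k+1}$ has a unique parent, so the value level demanded of it by \texttt{Update} is unambiguous and its subtree risk can be minimized in isolation. The governing semantics I would keep in mind is that ${\tt DP}_{\textsc{Er}}(s_k,\ell_k)$ is the minimum execution risk over all subpolicies rooted at $s_k$ whose discretized accrued value, as defined by~\eqref{eq:recd}, is at least $\ell_k$.

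For the value inequality I would first certify that the optimal policy is representable in the table. Define $\ell^*_k := \overline v_{\pi^*}(s_k)$ for every state $s_k$ reachable under $\pi^*$. By construction~\eqref{eq:recd} each $\ell^*_k$ is an integer multiple of $L_k$, and a short induction using $\sum_{s_{k+1}} T(s_k,\pi^*(s_k),s_{k+1})=1$ gives $\ell^*_k\le U_{\max}(h-k)$; hence $\ell^*_k\in\cL_k$. I would then prove, by backward induction on $k$, that ${\tt DP}_{\textsc{Er}}(s_k,\ell^*_k)\le \textsc{Er}_{\pi^*}(s_k)$. The base case $k=h$ is the initialization ${\tt DP}_{\textsc{Er}}(s_h,0)=r(s_h)$. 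For the step, note that when \texttt{Update} processes $(s_k,\ell^*_k)$ and examines the action $a^*=\pi^*(s_k)$ together with the allocation $\overline\Bell_{k+1}=(\overline v_{\pi^*}(s^i_{k+1}))_i$ induced by $\pi^*$, Line~\ref{alg:bel1} returns exactly $\overline v_{a^*}(s_k,\overline\Bell_{k+1})=\ell^*_k$ by~\eqref{eq:recd}, so the value test $\ge\ell^*_k$ passes; the resulting risk in Line~\ref{alg:er0}, after substituting the inductive bound ${\tt DP}_{\textsc{Er}}(s^i_{k+1},\overline v_{\pi^*}(s^i_{k+1}))\le \textsc{Er}_{\pi^*}(s^i_{k+1})$ and applying Lemma~\ref{lem:rec}, is at most $\textsc{Er}_{\pi^*}(s_k)$. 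Since \texttt{Update} minimizes over all feasible actions and allocations, the stored risk is no larger, closing the induction. At the root this yields ${\tt DP}_{\textsc{Er}}(s_0,\ell^*_0)\le \textsc{Er}_{\pi^*}(s_0)\le\Delta$, so $\ell^*_0$ is a feasible level and the maximal feasible level chosen in Line~\ref{alg:f1} satisfies $\ell_0\ge \ell^*_0=\overline v_{\pi^*}(s_0)$.

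It then remains to relate $\ell_0$ and $\Delta$ to what the \emph{extracted} policy actually realizes. Walking down the chains $\cC_k$ produced by \texttt{Fetch-Policy}, I would prove by backward induction two facts for each visited pair $(s_k,\overline\ell_k)$: first, $\textsc{Er}_\pi(s_k)={\tt DP}_{\textsc{Er}}(s_k,\overline\ell_k)$, because the stored action and allocation feed precisely the recursion of Lemma~\ref{lem:rec} and the children's recorded levels are exactly their own table entries; and second, $\overline v_\pi(s_k)\ge\overline\ell_k$, since the stored allocation was certified by \texttt{Update} to satisfy $\overline v_a(s_k,\overline\Bell_{k+1})\ge\overline\ell_k$, and replacing each child's promised level $\overline\ell^i_{k+1}$ by its true (weakly larger, by the inductive hypothesis) discretized value $\overline v_\pi(s^i_{k+1})$ only increases the floored expression~\eqref{eq:recd}, by monotonicity of the floor and nonnegativity of $T$. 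Combining everything: feasibility follows from $\textsc{Er}_\pi(s_0)={\tt DP}_{\textsc{Er}}(s_0,\ell_0)\le\Delta$, and the value bound from $\overline v_\pi(s_0)\ge\ell_0\ge\overline v_{\pi^*}(s_0)$.

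The main obstacle I anticipate is the bookkeeping that makes the allocation vectors consistent with a single coherent executed policy, which is exactly where the disjoint-transition assumption is indispensable. In a general DAG a state could be reached from several parents, each demanding a different value level of it, and there would then be no guarantee that one stored action/allocation simultaneously realizes the risk and the value promised at the root—the clean per-child decomposition used in both inductions would collapse. A secondary point to treat carefully is confirming that every level arising in the argument ($\ell^*_k$ for the optimum and $\overline\ell_k$ along the extracted chain) genuinely lies in the finite grid $\cL_k$, so that the invoked table entries are actually computed by the algorithm.
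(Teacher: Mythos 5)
Your proposal is correct and follows essentially the same route as the paper's proof: a backward induction showing that the discretized optimal values $\overline v_{\pi^*}(s_k)$ lie in the grid $\cL_k$ and are represented in the table with execution risk at most $\textsc{Er}_{\pi^*}(s_k)$, combined with the disjoint-transition (unique-parent) argument that makes the policy extracted by {\tt Fetch-Policy} consistent. Your version is somewhat more explicit than the paper's---in particular the second induction tying $\textsc{Er}_\pi(s_k)$ and $\overline v_\pi(s_k)$ to the backtracked table entries, which the paper compresses into the remark that the property is ensured ``as shown by {\tt Fetch-Policy}''---but the decomposition and key ideas are the same.
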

\begin{proof}
	We show (by induction) that for some $\ell_k \in \cL_k$ and $\overline \Bell_{k+1} \in \cL_{k+1}^{|N_a(s_k)|}$, there exists an action $a$ such that  
	$\ell_k = \overline v_a(s_k, \overline \Bell_{k+1}) \ge \overline v_{\pi^*}(s_k),$
	where $\overline v_a(\cdot,\cdot)$ is defined in Line~\ref{alg:bel1} of {\tt Update}. The algorithm enumerates all values of $\overline \Bell_{k+1}$ such that it attains the minimum execution risk for every $\ell_k \in \cL_k$. Throughout recursion, the procedure ensures that a feasible solution $\pi$ can be constructed such that $\ell_k = \overline v_a(s_k, \overline \Bell_{k+1}) \ge \overline v_{\pi^*}(s_k)$, as shown by {\tt Fetch-Policy}.
    
	We proceed with the induction proof; for the base case, $k=h-1$, we have
	$ \overline v_a(s_{h-1}, \overline \Bell_{h}) = \lfloor U(s_{h-1}, a)/L_{h-1} \rfloor L_{h-1}$. Clearly, there is an action that satisfies the claim.
    For the inductive step, suppose the claim holds at step $k$, we show that the claim also holds for step $k-1$. Note that algorithm  {\tt lim-DynMDP} enumerates all discretized allocations $\overline \Bell_k$ at step $k-1$  (as per Step~\ref{alg:alc} of {\tt Update}). Also note that $\overline v_{\pi^*}(s_k) \in \cL_k$ as the largest element in set $\cL_k$, defined in Eq.~\raf{eq:defL},  satisfies$\lfloor {U_{\max}(h-k)}/{L_k}\rfloor L_k \ge \overline v_{\pi^*}(s_k)$. Hence, there exists an allocation $\overline \Bell_k$ such that each $i$-th element $\overline \ell_k^i = \overline v_{a}(s^i_k, \overline \Bell_{k+1}) \ge \overline v_{\pi^*}(s^i_k)$ for some $\overline \Bell_{k+1}$ (inductive assumption).
	Hence, there exists an $\ell_{k-1}$ and an action $a$ such that
	\begin{align}
		 &\ell_{k-1}= \overline v_a(s_{k-1}, \overline \Bell_k) \notag\\
		 &= \bigg\lfloor \tfrac{1}{L_{k-1}}\Big(\sum_{\mathclap{s^i_k \in N_a(s_{k-1})}} T(s_{k-1}, a, s^i_k) \overline \ell^{i}_{k} + U(s_{k-1}, a) \Big)\bigg\rfloor L_{k-1}\notag \\
		&\ge  \bigg\lfloor \tfrac{1}{L_{k-1}}\Big(\sum_{\mathclap{s_{k} \in \cS_k}} T(s_{k-1}, a, s_{k}) \overline v_{\pi^*}(s_k) + U(s_{k-1}, a)\Big)\bigg\rfloor L_{k-1} \notag \\
		&= \overline v_{\pi^*}(s_{k-1}),
	\end{align}
	 where the inequality follows by the inductive assumption. 

It remains to show that such action $a$ is feasible.
	 Each cell ${\tt DP}_{\pi}(\cdot, \cdot)$ corresponds to an action that achieves the minimum execution risk that accrues a total value of at least $\ell_k$. Since the execution risk is a non-decreasing function (Line~\ref{alg:er0} of {\tt Update}), ${\tt DP}_{\textsc{Er} }(s_k, \ell_{k}) \le  {\textsc{Er}}_{\pi^*}(s_{k}) \le  \Delta$ for some $\ell_k =  \overline v_{\pi^*}(s_k)$. By the disjoint transition assumption, there is a unique state $s_{k-1}$  that involves the row $\texttt{DP}_{\textsc{ER}}(s_k, \cdot)$, $s_k \in N_a(s_{k-1})$, in computing $\texttt{DP}_{\textsc{ER}}(s_{k-1},  \ell_{k-1})$ (Line~\ref{alg:er0} of {\tt Update}). Hence, only table cells related to state $s_{k-1}$ sets the values of $\overline\ell_k^i$ of the subsequent states $s_k \in N_a(s_{k-1})$. As each cell $\texttt{DP}_{\pi}(s_{k},  \overline \ell^i_{k})$ corresponds to a single action, no two $s_{k-1}, s'_{k-1}$ share subsequent state $s_k$, and only one cell among row $\texttt{DP}_{\pi}(s_{k-1},\cdot)$ is backtracked by {\tt Fetch-Policy},  the policy remains consistent, i.e., it outputs a single action for each state. (Note that this is not the case if $s_k$ has multiple parents in the And-Or graph, which is the case under local transition assumption.) Such  c                                                                                                                                                                                                                                                                                                                                                                                                                                                                                                         action is backtracked by {\tt Fetch-Policy}. Therefore, policy $\pi$ is feasible. 
\end{proof}

\begin{lemma}\label{lem:dis}
	An optimal deterministic policy $\pi^*$ satisfies    $\overline v_{\pi^*}(s_{k}) \ge v_{\pi^*}(s_k)- \sum_{k'=k}^{h-1} L_{k'}.$
\end{lemma}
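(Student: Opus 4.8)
The plan is to establish the bound by backward induction on the time index $k$, running from $k=h$ down to the level of interest. The two facts that drive the argument are: (i) the elementary floor inequality $\lfloor y/L_k\rfloor L_k \ge y - L_k$, which quantifies the loss incurred by the single rounding step in Eq.~\raf{eq:recd}; and (ii) the stochasticity of the transition kernel, namely $\sum_{s_{k+1}\in\cS_{k+1}} T(s_k,\pi^*(s_k),s_{k+1}) = 1$.

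For the base case $k=h$, both the true value function $v_{\pi^*}(s_h)$ and its discretized counterpart $\overline v_{\pi^*}(s_h)$ equal $0$ (the utility sum defining the value function is empty at the horizon), and the claimed inequality $\overline v_{\pi^*}(s_h) \ge v_{\pi^*}(s_h) - \sum_{k'=h}^{h-1} L_{k'}$ holds trivially with both sides equal to zero. For the inductive step, I would assume the bound holds at level $k+1$ for every reachable state. Writing $a = \pi^*(s_k)$ and applying the floor inequality (i) to Eq.~\raf{eq:recd} gives
$$\overline v_{\pi^*}(s_k) \ge \sum_{s_{k+1}\in\cS_{k+1}} T(s_k,a,s_{k+1})\,\overline v_{\pi^*}(s_{k+1}) + U(s_k,a) - L_k.$$
I would then substitute the inductive hypothesis $\overline v_{\pi^*}(s_{k+1}) \ge v_{\pi^*}(s_{k+1}) - \sum_{k'=k+1}^{h-1} L_{k'}$ into each summand. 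Because the common slack term $\sum_{k'=k+1}^{h-1} L_{k'}$ is independent of $s_{k+1}$, fact (ii) lets it pass through the transition-weighted sum unchanged, so the aggregated slack collapses to exactly $\sum_{k'=k+1}^{h-1} L_{k'}$. Recognizing the true Bellman recursion in the remaining terms yields $\overline v_{\pi^*}(s_k) \ge v_{\pi^*}(s_k) - \sum_{k'=k+1}^{h-1} L_{k'} - L_k = v_{\pi^*}(s_k) - \sum_{k'=k}^{h-1} L_{k'}$, completing the induction.

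The step I expect to matter most is the use of fact (ii): it is precisely the normalization of the transition probabilities that prevents the rounding error from being amplified as it propagates backward through the Bellman recursion. Without it, the accumulated slack could in principle be scaled by the branching of the tree and blow up; with it, the errors merely add one $L_k$ per level, giving the clean additive bound $\sum_{k'=k}^{h-1} L_{k'}$. The rest is routine bookkeeping of the telescoping slack terms, and I anticipate no genuine difficulty beyond correctly tracking the index ranges in the summations.
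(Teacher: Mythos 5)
Your proof is correct and follows essentially the same route as the paper's: backward induction using the floor inequality $\lfloor x/y\rfloor y \ge x - y$ and the normalization $\sum_{s_{k+1}} T(s_k,\pi^*(s_k),s_{k+1}) = 1$ to keep the rounding losses purely additive across levels. The only cosmetic difference is that you anchor the induction at $k=h$ (where both values vanish) while the paper starts at $k=h-1$, and you apply the floor inequality before substituting the inductive hypothesis rather than after; neither affects the argument.
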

\begin{proof}
	We proceed with an inductive proof. For the base case, computing  Eq.~\raf{eq:recd} for $\pi^*$ at $k=h-1$, we have
	\begin{align}
		&\overline v_{\pi^*}(s_{h-1})  = \Big\lfloor \tfrac{U(s_{h-1}, \pi^*(s_{h-1}))}{L_{h-1}}\Big\rfloor \cdot L_{h-1}\notag\\
		& \ge U(s_{h-1}, \pi^*(s_{h-1})) - L_{h-1} =   v_{\pi^*}(s_{h-1}) - L_{h-1},
	\end{align}
    which follows using the property $\lfloor \frac{x}{y}\rfloor y \ge x - y$ for $x,y \in \RR_+$.
	For the inductive step, suppose that we have,
	$	\overline v_{\pi^*}(s_{k}) \ge v_{\pi^*}(s_k)- \sum_{k'=k}^{h-1} L_{k'}.$
	We compute the corresponding inequality for $s_{k-1}$ as follows,
	\begin{align}
		&\overline v_{\pi^*}(s_{k-1})  = \bigg\lfloor \tfrac{1}{L_{k-1}}\Big(\sum_{s_{k} \in \cS_k} T(s_{k-1}, \pi^*(s_{k-1}), s_{k}) \cdot \overline v_{\pi^*}(s_{k}) \notag \\
        &\qquad + U(s_{k-1}, \pi^*(s_{k-1}))\Big)\bigg\rfloor L_{k-1} \\
		& \ge \bigg\lfloor \tfrac{1}{L_{k-1}}\Big(\sum_{s_{k} \in \cS}  T(s_{k-1}, \pi^*(s_{k-1}), s_{k}) \cdot \big( v_{\pi^*}(s_{k}) \notag \\
        &\qquad -  \sum_{k'=k}^{h-1}     L_{k'} \big) + U(s_{k-1}, \pi^*(s_{k-1}))\Big)\bigg\rfloor L_{k-1}, \label{eq:inductive}
   \end{align}
   where Eq.~\raf{eq:inductive} follows by the inductive assumption. Since $T(\cdot,\cdot,\cdot)$ is a probability function that adds up to one, and using the property $\lfloor \frac{x}{y}\rfloor y \ge x - y$ for $x,y \in \RR_+$, we obtain,
   \begin{align}
		&\overline v_{\pi^*}(s_{k-1})\ge  \sum_{s_{k} \in \cS}  T(s_{k-1}, \pi^*(s_{k-1}), s_{k}) \cdot  v_{\pi^*}(s_{k}) \notag \\
        &\qquad + U(s_{k-1}, \pi^*(s_{k-1})) - \sum_{k'=k-1}^{h-1}   L_{k'} \notag\\
		&= v_{\pi^*}(s_{k-1}) - \sum_{k'=k-1}^{h-1}   L_{k'}, \label{eq:lem2f}
	\end{align}
	which completes the inductive proof. 
\end{proof}

\begin{corollary} \label{thm:fptas1}
	{\tt lim-DynMDP} is an FPTAS for (C)C-MDP under limited and disjoint transition assumptions.
\end{corollary}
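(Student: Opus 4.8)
The goal is to verify the two defining properties of an FPTAS: that {\tt lim-DynMDP} returns a feasible policy $\pi$ with $v_\pi(s_0)\ge(1-\epsilon)\OPT$, and that its running time is polynomial in the input size and in $1/\epsilon$. The plan is to obtain the approximation guarantee by chaining Lemmas~\ref{lem:opt} and~\ref{lem:dis} together with one elementary monotonicity observation, and to obtain the running-time bound by a direct accounting of the table size and the per-cell work, invoking the limited-transition assumption exactly where the allocation enumeration could otherwise explode.

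For the approximation ratio I would first record that the discretized value never exceeds the true value, i.e. $\overline v_\pi(s_0)\le v_\pi(s_0)$, which follows by a short induction on $k$ using $\lfloor x/L\rfloor L\le x$ at each application of Eq.~\raf{eq:recd} and the inductive bound on the children. Combining this with Lemma~\ref{lem:opt} ($\overline v_\pi(s_0)\ge\overline v_{\pi^*}(s_0)$) and Lemma~\ref{lem:dis} ($\overline v_{\pi^*}(s_0)\ge v_{\pi^*}(s_0)-\sum_{k'=0}^{h-1}L_{k'}$) yields $v_\pi(s_0)\ge v_{\pi^*}(s_0)-\sum_{k'=0}^{h-1}L_{k'}$. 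The decisive step is to evaluate the accumulated discretization error. Substituting $L_{k'}$ from Eq.~\raf{eq:defL} and reindexing $j=h-k'$ gives
\[
\sum_{k'=0}^{h-1}L_{k'}=\frac{\epsilon U_{\max}}{\ln h+1}\sum_{j=1}^{h}\frac{1}{j}\le\frac{\epsilon U_{\max}}{\ln h+1}\,(\ln h+1)=\epsilon U_{\max},
\]
using the harmonic-number bound $\sum_{j=1}^h \tfrac1j\le\ln h+1$. Finally, the standing assumption $v_{\pi^*}(s_0)\ge U_{\max}$ converts this additive slack into a multiplicative one: $v_\pi(s_0)\ge v_{\pi^*}(s_0)-\epsilon U_{\max}\ge(1-\epsilon)v_{\pi^*}(s_0)=(1-\epsilon)\OPT$. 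Feasibility of $\pi$ is already supplied by Lemma~\ref{lem:opt}.

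For the running time I would bound the table dimensions. From Eq.~\raf{eq:defL}, $|\cL_k|=\lfloor U_{\max}(h-k)/L_k\rfloor+1=O\!\big((h-k)^2(\ln h+1)/\epsilon\big)=O\!\big(h^2\ln h/\epsilon\big)$, which is polynomial in the input and in $1/\epsilon$. Summed over all depths the number of cells is at most $h\,|\cS|\,\max_k|\cL_k|$. For each cell, {\tt Update} loops over $a\in\cA$ and over all discretized allocations $\overline\Bell_{k+1}\in\cL_{k+1}^{|N_a(s_k)|}$; here the limited-transition assumption $|N_a(s_k)|\le\gamma$ caps the enumeration at $|\cL_{k+1}|^{\gamma}$, so the per-cell cost is $O\!\big(|\cA|\,(\max_k|\cL_k|)^{\gamma}\big)$. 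Multiplying, the total work is $O\!\big(h\,|\cS|\,|\cA|\,(h^2\ln h/\epsilon)^{\gamma+1}\big)$, and {\tt Fetch-Policy} is a single backtracking pass of lower order. For fixed $\gamma$ this is polynomial in $|\cS|,|\cA|,h$ and $1/\epsilon$, establishing the FPTAS claim; the C-MDP variant follows verbatim with the cost recursion in place of the execution risk.

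The main obstacle—really the point that makes the scheme work rather than a routine computation—is the choice of the step size $L_k$ in Eq.~\raf{eq:defL}. The factors $(h-k)$ and $(\ln h+1)$ are engineered precisely so that the telescoped per-level errors sum to a harmonic series and collapse to $\epsilon U_{\max}$; a naive uniform discretization would accumulate error growing with $h$ and break the $(1-\epsilon)$ guarantee. The second point requiring care is the reliance on $\gamma$: the allocation loop of {\tt Update} is exponential in $|N_a(s_k)|$, so it is only the limited-transition bound that keeps the per-cell cost polynomial, and this is exactly why the present corollary is stated under both the limited and disjoint assumptions before the later relaxation.
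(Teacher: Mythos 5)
Your proposal is correct and follows essentially the same route as the paper's proof: chaining Lemma~\ref{lem:opt} with Lemma~\ref{lem:dis}, collapsing the per-level errors $\sum_{k'}L_{k'}$ to $\epsilon U_{\max}$ via the harmonic bound, invoking $v_{\pi^*}(s_0)\ge U_{\max}$ to pass from additive to multiplicative error, and bounding the table size times per-cell enumeration cost $|\cA|\,|\cL_{k+1}|^{\gamma}$ under the limited-transition assumption. Your only additions---the explicit inductive justification of $\overline v_\pi(s_0)\le v_\pi(s_0)$, which the paper uses without proof, and the slightly more careful cell count carrying an extra factor of $h$---are refinements of, not departures from, the paper's argument.
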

\begin{proof}
First, observe that the algorithm maintains a dynamic programming table with minimum execution risk. Subroutine {\tt Fetch-Policy} ensures that a feasible solution with such property is retrieved. The algorithm  runs in $O\big(\big(\tfrac{h^{2} \ln h+1}{\epsilon}\big)^{\gamma + 1} |\cA| |\cS|\big)$.  Note that by the limited transition assumption, $\gamma$ is a constant; thus, the running time is polynomial.
By Lemma~\ref{lem:dis} and by the definition of $L_k$ given in Eq.~\raf{eq:defL}, we obtain
	\begin{align}
		\overline v_{\pi^*}(s_0)  &\ge v_{\pi^*}(s_0) - \sum_{k=0}^{h-1}   L_{k}  \notag \\
        &= v_{\pi^*}(s_0) - \sum_{k=0}^{h-1} \frac{\epsilon U_{\max}}{(h-k)(\ln h + 1)} \notag \\
		&= v_{\pi^*}(s_0) - \frac{\epsilon U_{\max}}{\ln h + 1} \sum_{k=0}^{h-1} \frac{1}{(h-k)}\notag \\
		&\ge v_{\pi^*}(s_0) - \frac{\epsilon U_{\max}}{\ln h + 1} (\ln h + 1)  \label{eq:har}\\
		&\ge (1-\epsilon)\cdot v_{\pi^*}(s_0), \label{eq:har2}
	\end{align}
where Eq.~\raf{eq:har} follows by using an upper bound on the harmonic series, $\sum_{n=1}^k \tfrac{1}{n} \le \ln k + 1$. 
By Lemma~\ref{lem:opt} and Eq.~\raf{eq:har2},
	$v_\pi(s_0) \ge  \overline  v_{\pi}(s_0) \ge   \overline v_{\pi^*}(s_0) \ge (1-\epsilon) v_{\pi^*}(s_0),$
which completes the proof.
\end{proof}

\vspace{-5pt}
\subsection{FPTAS for (C)C-MDP under Disjoint Transition}\label{sec:fptas-disjoint}

In this section, we relax the limited transition assumption and show how to obtain an FPTAS for (C)C-MDP. In other words, we assume $\gamma$ is a polynomial in definition~\ref{def:loc}.
The main idea behind our algorithm is to improve {\tt Update} subroutine to avoid  full enumeration of $\overline \Bell_{k+1}$, which is exponential in the number of subsequent states $|N_a(s_k)|$. Such enumeration could be feasible under the limited transition assumption, but not in general. We show here how the structure of this step could be exploited. Notably, finding an allocation that achieves the minimum execution risk such that the total utility value is at least $\ell_k$ is a slight generalization for a well-known problem called {\em minimum Knapsack} (MinKS) \cite{pruhs2007approximation,bentz2016note}. 
More formally, 
\begin{definition} \label{def:ks}
	Multiple-choice minimum Knapsack problem (McMinKS) is defined as follows. Given a set of categories $\cN$, and a set of allowable choices $\cM_i$ per category $i\in \cN$, an item $(i,j)$ is defined by weight $w_{i,j} \in \RR_+$ and value $v_{i,j}\in \RR_+$ for $i \in \cN$ and $j \in \cM_i$. The goal is to select one item  from the allowable choices  $\cM_i$ per category $i$ (hence the name multiple-choice) such that the total weight is minimized, and the total value is at least $D \in \RR_+$.
\end{definition}
The problem can be formally defined as an {\em integer linear program} (ILP) as follows.
\begin{align}
\text{(McMinKS)}\quad & \min_{x_{i,j} \in \{0,1\}}  \sum_{i\in \cN} \sum_{j \in \cM_i} w_{i,j} x_{i,j}, \notag\\
\text{Subject to}\quad & \sum_{i \in \cN} \sum_{j\in \cM_i} v_{i,j} x_{i,j} \ge D, \label{con}\\
& \sum_{j \in \cM_i} x_{i,j} = 1, \quad \text{for all } i \in \cN.
\end{align}

Algorithm \ref{alg:update2}, denoted by {\tt KS-Update}, presents a reduction from the allocation subproblem to McMinKS in Lines~\ref{alg:ks0}-\ref{alg:ksl}. Indeed, finding an optimal solution for the corresponding McMinKS instance will obtain an FPTAS (Lemma~\ref{lem:opt} holds and hence Corollary~\ref{thm:fptas1} proof follows). However, MinKS is NP-Hard \cite{pruhs2007approximation,kellerer2004knapsack}, therefore our best bet is to find an approximate solution in polynomial time.
Although there is an FPTAS for MinKS, the approximation guarantee is provided on the objective function, which in our case, following the reduction, is the constraint for  the original (C)C-MDP problem. Thus, we need an algorithm that bounds the constraint violation of McMinKS (which is the objective of (C)C-MDP, following the reduction above).
some modifications are needed to the algorithm to obtain a bounded McMinKS constraint violation and handle the multiple-choice extension (as we will see next).

\begin{algorithm}[ht] 
	\footnotesize
	\caption{\mbox{\texttt{KS-Update}$[s_k,\ell_k, \{\texttt{DP}(s,\ell)\}_{s\in \cS_{k+1}}^{\ell\in \cL_{k+1}}  ]$}}\label{alg:update2}
	$\textsc{Er}(s_k)\leftarrow \infty$; $\textsc{Act} \leftarrow \varnothing$;   $\textsc{Alloc} \leftarrow \bzero$ \\
	\For{$a \in \cA$}
	{
		// Find an allocation $\overline \Bell_{k+1}$ that achieves the minimum execution risk for action $a$ such that the total utility value is at least $\ell_k$ \\
		Let $\cN := \{1,...,|N_a(s_k)|\}$ \label{alg:ks0}\\
		Let $\cM_i := \cL_{k+1}$ for $i\in \cN$ \\
		Let $w_{i,j} := T(s_k, a, s^i_{k+1}) \cdot \texttt{DP}_{\textsc{Er}}(s^i_{k+1}, \overline \ell^i_{k+1})$ for all  $j=\overline \ell^i_{k+1} \in \cL_{k+1}$ and  $s^i_{k+1} \in N_a(s_k)$ \label{alg:redw}\\
		Let $v_{i,j} := {T(s_k, a, s^i_{k+1}) \cdot \overline \ell^i_{k+1}}$ for  $j=\overline\ell^i_{k+1} \in \cM_i$ and $i$ such that  $s^i_{k+1} \in N_a(s_k)$\\
		Let $D:= \ell_k - U(s_k, a)$ \label{alg:ksl}\\
		$\overline \Bell_{k+1} \leftarrow$  \texttt{Dyn-MinKS}$[(w_{i,j}, v_{i,j})_{i\in \cN,j \in  \cM_i}, D]$ \label{alg:lineks}
		
		$\textsc{Er}_a(s_k, \overline \Bell_{k+1}) := r(s_k)+(1-r(s_k)) \sum_{s^i_{k+1} \in N_a(s_k)} T(s_k, a, s^i_{k+1}) \cdot \texttt{DP}_{\textsc{Er}}(s^i_{k+1}, \overline \ell^i_{k+1})$ \\
		\If{$\textsc{Er}_a(s_k, \overline \Bell_{k+1}) < \textsc{Er}(s_k)$}{ 
			$\textsc{Er}(s_k) \leftarrow \textsc{Er}_a(s_k, \overline \Bell_{k+1})$\\
			$\textsc{Act} \leftarrow a$\\
            $\textsc{Alloc} \leftarrow \overline \Bell_{k+1}$
		}
	
	}
	\Return $\textsc{Er}(s_k), \textsc{Act}, \textsc{Alloc}$
\end{algorithm}
\begin{algorithm}[ht] 
\footnotesize
	\caption{\texttt{Dyn-MinKS}$[(w_{i,j}, v_{i,j})_{i\in \cN,j \in  \cM_i}, D]$}\label{alg:ks}
	$\Bell=(\ell^i)_{i\in \cN} \leftarrow \bzero$ \\ 
	$\texttt{TB}(i, \rho) \leftarrow \infty$ for all  $i\in \cN$ and $\rho \in \cR$ \\
	$\texttt{TB}(0, \rho) \leftarrow 0$ for all $\rho$\\
	Let $\overline v_{i,j} := \Big\lfloor \frac{v_{i,j}}{R_k}\Big\rfloor\cdot R_k$ for all  $i\in \cN$ and  $j \in \cM_i$ \label{alg:discdem}\\
	\For{$i  = 1,..., |\cN|$}
	{
		\For{$\rho \in \cR_k = \Big\{0, R_k, 2R_k,..., \Big\lfloor \frac{D + \max_{i,j} v_{i,j} }{R_k} \Big\rfloor R_k \Big\}$} {
			$\texttt{TB}(i, \rho) \leftarrow \min_{j \in \cM_i} \texttt{TB}(i-1, [\rho - \overline v_{i,j}]^+) + w_{i,j} $\\
			$\texttt{Alc}(i,\rho) \leftarrow \arg\min_{j \in \cM_i} \texttt{TB}(i-1, [\rho - \overline v_{i,j}]^+) + w_{i,j}$,  where $[x]^+ = x$ if $x\ge 0$ and $[x]^+=0$ otherwise, for any $x \in \RR$
			
		}
	}
		Find minimum $\rho'$ such that $\rho' \ge D$\label{alg:fetch1}\\
		\For{$i = |\cN|, |\cN|-1,...,  1$ }{ 
			$j = \texttt{Alc}(i,\rho' )$;
			$\rho' \leftarrow \rho' -  \overline v_{i,j}$ \label{alg:fetch2};
            $\ell^i \leftarrow j$ 
		}
	\Return $\Bell$ 	
\end{algorithm}

Algorithm~\ref{alg:ks},  denoted as {\tt Dyn-MinKS},  gives a dynamic programming procedure to solve McMinKS within a bounded constraint violation. The algorithm rounds the values into a discrete set of possible values that provably can have a bounded constraint violation (as per Lemma~\ref{lem:minks} below).  
The set of possible {\em discretized} values $\cR_k$ is defined as 
\begin{equation}
	\cR_k := \Big\{0, 1R_k,..., \Big\lfloor \frac{D+ \max_{i,j} v_{i,j}}{R_k} \Big\rfloor R_k \Big\}, \label{eq:defR}
\end{equation}
where $R_k$ is a discretization factor defined below.
Let $\Bell$  be an allocation returned by algorithm  {\tt Dyn-MinKS} and $\Bell^*$ be an optimal solution. 
\begin{lemma}\label{lem:minks}
Algorithm {\tt Dyn-MinKS} obtains a solution $\Bell = (\ell^1,..., \ell^{|\cN|})$ that  satisfies
\begin{align*}
\sum_{i \in \cN}  w_{i,\ell^{i}} \le \sum_{i \in \cN}  w_{i,\ell^{i*}}, \text{ and }
\sum_{i \in \cN}  v_{i,\ell^{i}} \ge \sum_{i \in \cN}  v_{i,\ell^{i*}} -  |\cN| R_k,
\end{align*}
where $\Bell^*$ is an optimal solution.
\end{lemma}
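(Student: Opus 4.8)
The plan is to prove the two inequalities by first pinning down exactly what the table $\texttt{TB}$ computes, and then comparing the returned allocation $\Bell$ against the optimum $\Bell^*$ through the rounded values $\overline v_{i,j}$.

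First I would establish the dynamic-programming invariant by induction on the category index $i$: that $\texttt{TB}(i,\rho)$ equals the minimum total weight $\sum_{i'=1}^{i} w_{i',j_{i'}}$ over all choices of one item $j_{i'}\in\cM_{i'}$ per category $i'\le i$ whose rounded value $\sum_{i'=1}^{i}\overline v_{i',j_{i'}}$ is at least $\rho$ (and $\infty$ if no such choice exists). After fixing the boundary condition for the zero-category row, the inductive step is exactly the recurrence $\texttt{TB}(i,\rho)=\min_{j\in\cM_i}\{w_{i,j}+\texttt{TB}(i-1,[\rho-\overline v_{i,j}]^+)\}$; the truncation $[\cdot]^+$ is what correctly encodes ``at least $\rho$'' (any surplus over the target is free, so the residual target is clipped at $0$). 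I would also record the monotonicity of $\texttt{TB}(|\cN|,\cdot)$ in its second argument, since raising the value target can only raise the minimum weight. A short argument then shows the fetch/backtrack loop (Lines~\ref{alg:fetch1}--\ref{alg:fetch2}), driven by the companion table $\texttt{Alc}$, reconstructs an allocation $\Bell$ that actually attains $\texttt{TB}(|\cN|,\rho')$ for the fetched target $\rho'=\min\{\rho\in\cR_k:\rho\ge D\}$.

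For the weight inequality, I would exhibit $\Bell^*$ as an admissible candidate in the very cell that the DP optimizes, so that optimality of $\texttt{TB}$ forces $\sum_i w_{i,\ell^i}\le\sum_i w_{i,\ell^{i*}}$. The point to verify is that the rounded value of $\Bell^*$ meets the target that the backtrack uses; here the per-item rounding bound $\overline v_{i,j}=\lfloor v_{i,j}/R_k\rfloor R_k$, i.e. $v_{i,j}-R_k<\overline v_{i,j}\le v_{i,j}$, is combined with the definition of $\cR_k$ and the feasibility $\sum_i v_{i,\ell^{i*}}\ge D$ of the optimum. For the value inequality I would start from the fact that $\Bell$ satisfies the discretized requirement returned by the DP and then translate rounded values back to true values: summing $v_{i,\ell^i}\ge\overline v_{i,\ell^i}$ over the $|\cN|$ categories, and using $\overline v_{i,\ell^{i*}}>v_{i,\ell^{i*}}-R_k$ on the optimum side, the two roundings telescope into a single additive loss of $|\cN|R_k$, yielding $\sum_i v_{i,\ell^i}\ge\sum_i v_{i,\ell^{i*}}-|\cN|R_k$.

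The main obstacle is the asymmetry between the two criteria: weights enter the recurrence exactly (never rounded), whereas values are rounded down, so the DP optimizes weight over a feasible region defined by rounded values. The delicate step is therefore to show that $\Bell^*$ remains admissible in the discretized program that the DP actually solves, despite each value being rounded down --- that is, to account for the total $|\cN|R_k$ of value slack consistently, so that the same budget simultaneously (a) keeps $\Bell^*$ a valid competitor for the weight bound and (b) surfaces as the bounded constraint violation in the value bound. Getting these two uses of the single $|\cN|R_k$ budget to line up, together with the $[\cdot]^+$ truncation bookkeeping in the invariant, is where the care is needed; the remaining arithmetic (the per-item floor inequality and the sum over categories) is routine.
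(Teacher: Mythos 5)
Your plan follows essentially the same route as the paper's proof: the DP invariant that $\texttt{TB}(i,\rho)$ stores the minimum total weight over choices whose \emph{rounded} value is at least $\rho$, the weight bound obtained by exhibiting $\Bell^*$ as a competitor in the fetched cell, and the value bound obtained by translating rounded values back to true values so that the per-item loss $v_{i,j}-R_k<\overline v_{i,j}\le v_{i,j}$ telescopes into a single additive $|\cN|R_k$. On the bookkeeping you are in fact more careful than the paper: you prove the invariant by induction, record monotonicity of $\texttt{TB}(|\cN|,\cdot)$, argue the backtrack attains the tabulated optimum, and implicitly repair the base row (as printed, $\texttt{TB}(0,\rho)\leftarrow 0$ for \emph{all} $\rho$ would void the value constraint entirely; the intended base is $\texttt{TB}(0,0)=0$ and $\infty$ otherwise).

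However, the step you flag as delicate is a genuine gap, and your sketch does not close it. The fetched target is $\rho'=\min\{\rho\in\cR_k:\rho\ge D\}$, so admissibility of $\Bell^*$ in the cell $\texttt{TB}(|\cN|,\rho')$ requires $\sum_i\overline v_{i,\ell^{i*}}\ge\rho'\ge D$. The ingredients you invoke --- the per-item rounding bound together with feasibility $\sum_i v_{i,\ell^{i*}}\ge D$ --- yield only $\sum_i\overline v_{i,\ell^{i*}}> D-|\cN|R_k$, and rounding down can push the optimum's rounded total strictly below $D$: take $|\cN|=1$, $R_k=1$, $D=1.5$, and a single item of value $1.5$; then $\overline v=1<2=\rho'$, so $\Bell^*$ is not a competitor in the fetched cell and the weight inequality does not follow as argued. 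The single $|\cN|R_k$ budget cannot do double duty as you hope in your closing paragraph: it must be spent once, to restore admissibility. The standard repair is to lower the fetched target to the least grid point $\ge D-|\cN|R_k$; since $\sum_i\overline v_{i,\ell^{i*}}$ is a multiple of $R_k$ exceeding $D-|\cN|R_k$, it then dominates $\rho'$, the weight bound follows from the invariant, and the returned solution still satisfies $\sum_i v_{i,\ell^i}\ge\rho'\ge D-|\cN|R_k$, which is the violation the lemma tolerates (a $D$-relative value guarantee; the comparison against $\sum_i v_{i,\ell^{i*}}$ itself additionally needs $D$ to be matched to the optimum's allocation, which is how the lemma is deployed in Lemma~\ref{lem:opt2}). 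You should know that the paper's own proof has exactly the same soft spot: it asserts ``by the feasibility of optimal solutions $\sum_{i\in\cN}\overline v_{i,\ell^{i*}}\ge D$'' without justification, so your instinct about where the difficulty lies was correct, even though your proposal, like the paper, leaves that step open.
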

\begin{proof}
Define $\overline v_{i, \ell^i} := \lfloor v_{i, \ell^i}/R_k \rfloor R_k$ as in Line~\ref{alg:discdem} of {\tt Dyn-MinKS} (also define $\overline v_{i,\ell^{i*}} := \lfloor v_{i, \ell^{i*}}/R_k \rfloor R_k$).   The algorithm maintains a table {\tt TB}$(i,\rho)$ of minimum total item weights up to category $i$ that satisfies a total value of at least $\rho$. 
Since the algorithm discretizes values (Line~\ref{alg:discdem}), and  the largest element of $\cR$ is an upper bound on $\sum_{i \in \cN}  \overline v_{i,\ell^{i*}} $  (by the definition of $\cR$ in Eq.~\raf{eq:defR}), then any discretized optimal total values are considered in the table.  Therefore in steps~\ref{alg:fetch1}-\ref{alg:fetch2}, the algorithm obtains a minimum $\rho \ge D$ that accrues the least total weight, hence $\sum_{i \in \cN}  w_{i,\ell^{i}} \le \sum_{i \in \cN}  w_{i,\ell^{i*}}$. By the feasibility of optimal solutions $\sum_{i \in \cN} \overline v_{i,\ell^{i*}} \ge D$, and since $\rho$ is the least element that satisfies $\rho \ge D$,  we have
\begin{equation} \label{eq:minks-d}
\rho = \sum_{i \in \cN} \overline v_{i,\ell^{i}} \ge \sum_{i \in \cN} \overline v_{i,\ell^{i*}}.
\end{equation}
Thus, by Eq.~\raf{eq:minks-d} and rounding values down (Line~\ref{alg:discdem} of {\tt Dyn-MinKS}), 
$\sum_{i \in \cN}  v_{i,\ell^{i}} \ge \sum_{i \in \cN} \overline v_{i,\ell^{i}} \ge \sum_{i \in \cN} \overline v_{i,\ell^{i*}} \ge \sum_{i \in \cN} (v_{i,\ell^{i*}} - R_k) = \sum_{i \in \cN} v_{i,\ell^{i*}} -  |\cN| R_k ,$
which completes the proof.

\end{proof}

We define algorithm {\tt dis-DynMDP} by replacing {\tt Update} at Line~\ref{alg:update}  of {\tt lim-DynMDP} by {\tt KS-Update}, and using the following discretization factors, 
\begin{align} \label{eq:newR}
L_k = \frac{\epsilon U_{\max}}{3(h-k) (\ln h + 1)} \quad \text{and}\quad R_k = \frac{L_k}{\gamma}.
\end{align}

\begin{lemma}\label{lem:opt2}
	Let $\pi$ be a policy obtained by  {\tt dis-DynMDP} and $\pi^*$ be an optimal deterministic policy. The solution $\pi$ satisfies 
	 $\overline v_{\pi}(s_k) \ge \overline v_{\pi^*}(s_k) - 2 \sum_{k'=k}^{h-1}\cdot L_{k'}.$ 
\end{lemma}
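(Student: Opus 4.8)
The plan is to reuse the downward induction of Lemma~\ref{lem:opt}, but since {\tt KS-Update} now solves the allocation subproblem only approximately through {\tt Dyn-MinKS}, I must separate two quantities that coincided in the limited case: the \emph{nominal} value level $\ell_k$ that indexes a table cell (and on which the knapsack optimizes) and the \emph{actual} discretized value $\overline v_\pi(s_k)$ realized by the sub-policy extracted from that cell. Because Lemma~\ref{lem:minks} only guarantees that {\tt Dyn-MinKS} meets its demand up to a slack of $|\cN| R_k$, these two need no longer agree, and the whole difficulty is to control their gap.

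I would therefore carry two invariants through the induction on the level $k$. Invariant~(B): for every state $s_k$ there is a cell $(s_k,\ell_k)$ carrying a finite risk entry with $\ell_k \ge \overline v_{\pi^*}(s_k)$ and ${\tt DP}_{\textsc{Er}}(s_k,\ell_k) \le \textsc{Er}_{\pi^*}(s_k)$ (a lossless, feasible nominal level). Invariant~(A): for every cell $(s_k,\ell_k)$ carrying a finite risk entry, the extracted sub-policy satisfies $\overline v_\pi(s_k) \ge \ell_k - 2\sum_{k'=k}^{h-1} L_{k'}$. Combining (A) applied to the cell furnished by (B) immediately yields $\overline v_\pi(s_k) \ge \overline v_{\pi^*}(s_k) - 2\sum_{k'=k}^{h-1} L_{k'}$, i.e. the statement; feasibility of the final $\pi$ then comes from the risk clause of (B), from {\tt Fetch-Policy} selecting the largest feasible $\ell_0$, and from the disjoint-transition (unique-parent) consistency argument already used in Lemma~\ref{lem:opt}.

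For the base case $k=h-1$ both invariants hold trivially since $\overline v_a(s_{h-1},\cdot)=\lfloor U(s_{h-1},a)/L_{h-1}\rfloor L_{h-1}$ involves no allocation. For the inductive step I would, at $s_{k-1}$, fix $a^\star=\pi^*(s_{k-1})$ and target level $\ell_{k-1}=\overline v_{\pi^*}(s_{k-1})$, build the comparison allocation from the successor cells provided by~(B) at level $k$, and check it is feasible for demand $D=\ell_{k-1}-U(s_{k-1},a^\star)$ using $\sum_i T(s_{k-1},a^\star,s_k^i)\,\overline v_{\pi^*}(s_k^i)+U \ge \overline v_{\pi^*}(s_{k-1})$ and $\sum_i T=1$; the weight-minimization clause of Lemma~\ref{lem:minks} then keeps the stored risk $\le \textsc{Er}_{\pi^*}(s_{k-1})$, giving~(B). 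To propagate~(A) I would expand $\overline v_\pi(s_{k-1})$ via Eq.~\raf{eq:recd}, substitute the level-$k$ form of~(A) for each successor, apply the value clause of Lemma~\ref{lem:minks} (nominal shortfall $\le |\cN| R_{k-1}$), and use $|N_{a^\star}(s_{k-1})| \le \gamma$ together with $R_{k-1}=L_{k-1}/\gamma$ from Eq.~\raf{eq:newR} to bound that shortfall by $L_{k-1}$; the floor in Eq.~\raf{eq:recd} costs another $L_{k-1}$ through $\lfloor x/y\rfloor y \ge x-y$. These two $L_{k-1}$ terms are exactly what upgrades the accumulated deficit from $2\sum_{k'=k}^{h-1}L_{k'}$ to $2\sum_{k'=k-1}^{h-1}L_{k'}$, which both explains the constant $2$ and shows why the knapsack discretization is set to $R_k=L_k/\gamma$.

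The main obstacle is precisely the nominal-versus-actual gap forced by the inexact knapsack: one cannot simply assert $\overline v_\pi(s_k)\ge \overline v_{\pi^*}(s_k)$ as in Lemma~\ref{lem:opt}, because {\tt Dyn-MinKS} may populate a cell whose advertised value level it fails to truly reach. Invariant~(A) is the device that absorbs this, and the delicate point is that it must hold for \emph{every} cell (not merely the witness of~(B)), since the knapsack at level $k-1$ is free to select successor cells according to their nominal levels while the realized value depends on their actual sub-policy values; verifying that the induction closes with a clean per-level charge of $2L_{k-1}$ --- rather than an uncontrolled or $\gamma$-scaled error --- is the crux.
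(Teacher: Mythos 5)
Your proof is correct and follows essentially the same route as the paper's: a downward induction on $k$ in which a comparison allocation is built from the optimal policy's discretized successor values, the value clause of Lemma~\ref{lem:minks} contributes a shortfall of at most $|\cN| R_{k-1} \le \gamma R_{k-1} = L_{k-1}$ (via Eq.~\raf{eq:newR}), and the floor in Eq.~\raf{eq:recd} contributes the second $L_{k-1}$, yielding exactly the per-level charge of $2L_{k-1}$ that grows the deficit from $2\sum_{k'=k}^{h-1} L_{k'}$ to $2\sum_{k'=k-1}^{h-1} L_{k'}$. Your two-invariant bookkeeping (the nominal cell level versus the realized value of the extracted sub-policy, plus the risk clause) is a more careful rendering of what the paper compresses into the single assertion $\ell_k = \overline v_\pi(s_k) \ge \overline v_{\pi^*}(s_k) - 2\sum_{k'=k}^{h-1} L_{k'}$ together with an appeal to the feasibility argument of Lemma~\ref{lem:opt}, not a genuinely different argument.
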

\begin{proof}
	We show (by induction) that for some $\ell_k \in \cL_k$ and action $a$, we have  
	$\ell_k = \overline v_\pi(s_k) \ge \overline v_{\pi^*}(s_k)- 2 \sum_{k'=k}^{h-1}\cdot L_{k'}.$
	
	We proceed with the induction proof; for the base case, $k=h-1$, we have
	$ \overline v_\pi(s_{h-1}) = \lfloor U(s_{h-1}, a)/L_{h-1} \rfloor L_{h-1}$. Clearly, there is an action that satisfies the claim.
    For the inductive step, suppose the claim holds at step $k$; we show that the claim also holds for step $k-1$. 
    
    Since the {\tt dis-DynMDP} considers all possible values for $\ell_{k-1}$ at time $k-1$, there exists an $\ell_{k-1}$, an action $a$, and a solution $\overline \Bell_k$ (Line \ref{alg:lineks} of  {\tt KS-Update}) such that, 
	{\footnotesize
    \begin{align}
		&\ell_{k-1} = \bigg\lfloor \tfrac{1}{L_{k-1}}\Big(\sum_{s^i_k \in N_a(s_{k-1})} T(s_{k-1}, a, s^i_k) \overline \ell^{i}_{k} \notag \\
        &\qquad \quad + U(s_{k-1}, a) \Big)\bigg\rfloor L_{k-1}\notag \\
        	&\ge  \bigg\lfloor \tfrac{1}{L_{k-1}}\Big(\sum_{s^i_{k} \in N_a(s_{k-1})} T(s_{k-1}, a, s^i_{k}) \overline \ell^{i*}_k -  |N_{a}(s_{k-1})| R_k \notag \\
            &\qquad \quad + U(s_{k-1}, a)\Big)\bigg\rfloor L_{k-1} \label{eq:bylemminks}
     \end{align}       
     }%
     where Eq.~\raf{eq:bylemminks} follows by Lemma~\ref{lem:minks} (where $v_{i,j}:= T(s_{k-1},a, s^i_k)\cdot \overline v_\pi(s^i_k)$ and $w_{i,j}:=T(s_{k-1}, a, s^i_k) \cdot \texttt{DP}_{\textsc{Er}}(s^i_k, \overline \ell^i_{k})$ as per Line~\ref{alg:redw} of {\tt KS-Update}). By the inductive assumption and Eq.~\raf{eq:bylemminks},
     {\footnotesize
     \begin{align*}
  		\ell_{k-1} &\ge  \bigg\lfloor \tfrac{1}{L_{k-1}}\Big(\sum_{s_{k} \in \cS} T(s_{k-1}, a, s_{k}) \big(\overline v_{\pi^*}(s_k) - 2\sum_{k'=k}^{h-1}L_{k'}\big) \notag \\
        &\qquad \quad - |\cN_{a}(s_{k-1})| R_{k-1} + U(s_{k-1}, a)\Big)\bigg\rfloor L_{k-1} 
  \end{align*}
  }%
Therefore,
  {\footnotesize
  \begin{align}
       &\ell_{k-1} \ge  \bigg\lfloor \tfrac{1}{L_{k-1}}\Big(\sum_{s_{k} \in \cS_k} T(s_{k-1}, a, s_{k}) \overline v_{\pi^*}(s_k) + U(s_{k-1}, a) \notag \\
       &\qquad\quad - 2\sum_{k'=k}^{h-1}L_{k'} - \gamma R_{k-1}\Big)\bigg\rfloor L_{k-1} \label{eq:dround0}
 \end{align}
 }%
 Thus, by the definition of $R_{k}$ in Eq.~\raf{eq:newR}, the r.h.s of Eq.~\raf{eq:dround0} can be written as,
 \begin{align}
        &\ell_{k-1}\ge  \bigg\lfloor \tfrac{1}{L_{k-1}}\Big(\sum_{s_{k} \in \cS_k} T(s_{k-1}, a, s_{k}) \overline v_{\pi^*}(s_k) + U(s_{k-1}, a) \notag \\
        &\qquad \quad -2\sum_{k'=k}^{h-1}L_{k'} - L_{k-1} \Big)\bigg\rfloor L_{k-1}  \label{eq:dround1}\\
       &\ge \overline v_{\pi^*}(s_k) -  2 \sum_{k'=k-1}^{h-1} L_{k'}  \label{eq:dround2},
	\end{align}
 By the disjoint transition assumption (following the feasibility argument in the proof of Lemma~\ref{lem:opt}), policy $\pi$ is feasible. 
\end{proof}

\begin{corollary} \label{thm:fptas2}
	Algorithm {\tt dis-DynMDP} is an FPTAS for (C)C-MDP under disjoint transition assumption.
\end{corollary}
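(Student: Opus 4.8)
The plan is to establish that {\tt dis-DynMDP} satisfies the two defining properties of an FPTAS: it produces a feasible policy whose objective value is at least $(1-\epsilon)$ times optimal, and it runs in time polynomial in the input size and in $\tfrac{1}{\epsilon}$. The structure mirrors the proof of Corollary~\ref{thm:fptas1}, but now I must account for the additional error introduced by the approximate McMinKS solver {\tt Dyn-MinKS}. The two ingredients I would combine are Lemma~\ref{lem:opt2}, which already controls the gap between the discretized value of the returned policy $\overline v_{\pi}(s_0)$ and the discretized optimum $\overline v_{\pi^*}(s_0)$, and Lemma~\ref{lem:dis}, which relates the discretized optimum $\overline v_{\pi^*}(s_0)$ back to the true optimum $v_{\pi^*}(s_0)$.

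First I would chain the two error bounds at the root state $s_0$. Lemma~\ref{lem:opt2} gives $\overline v_{\pi}(s_0) \ge \overline v_{\pi^*}(s_0) - 2\sum_{k=0}^{h-1} L_k$, and Lemma~\ref{lem:dis} gives $\overline v_{\pi^*}(s_0) \ge v_{\pi^*}(s_0) - \sum_{k=0}^{h-1} L_k$ (noting that Lemma~\ref{lem:dis} depends only on the discretization of the Bellman recursion in Eq.~\raf{eq:recd}, which is unchanged, so it carries over verbatim to the new $L_k$). Adding these yields $\overline v_{\pi}(s_0) \ge v_{\pi^*}(s_0) - 3\sum_{k=0}^{h-1} L_k$. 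The crucial point is the factor of $3$: this is exactly why the discretization factor in Eq.~\raf{eq:newR} was chosen as $L_k = \tfrac{\epsilon U_{\max}}{3(h-k)(\ln h+1)}$, i.e.\ a third of the value used in the limited-transition case. Substituting this choice and invoking the harmonic-series bound $\sum_{k=0}^{h-1}\tfrac{1}{h-k} = \sum_{n=1}^{h}\tfrac 1 n \le \ln h + 1$ exactly as in Eq.~\raf{eq:har}, the $3$ cancels the $\tfrac13$ and the $(\ln h+1)$ terms cancel, giving $3\sum_{k=0}^{h-1}L_k \le \epsilon U_{\max} \le \epsilon\, v_{\pi^*}(s_0)$ by the normalization $v_{\pi^*}(s_0) \ge U_{\max}$. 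Hence $v_\pi(s_0) \ge \overline v_\pi(s_0) \ge (1-\epsilon) v_{\pi^*}(s_0)$, establishing the approximation ratio. Feasibility is already asserted within Lemma~\ref{lem:opt2} via the disjoint-transition argument of Lemma~\ref{lem:opt}.

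For the running time, I would bound the cost of the dynamic program. The outer table has $O(|\cS|\cdot|\cL_k|)$ cells, and $|\cL_k| = O\big(\tfrac{U_{\max}(h-k)}{L_k}\big) = O\big(\tfrac{h^2(\ln h+1)}{\epsilon}\big)$. Each cell invokes {\tt KS-Update}, which loops over $|\cA|$ actions and calls {\tt Dyn-MinKS}; the latter fills a table of size $|\cN|\cdot|\cR_k|$ with an inner minimization over $|\cM_i| = |\cL_{k+1}|$ choices, where $|\cN| \le \gamma$ and $|\cR_k| = O\big(\tfrac{D+\max v_{i,j}}{R_k}\big) = O\big(\tfrac{\gamma\, h^2(\ln h+1)}{\epsilon}\big)$ since $R_k = L_k/\gamma$. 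Multiplying these factors gives a bound polynomial in $|\cS|,|\cA|,h,\gamma$ and $\tfrac1\epsilon$; since this subsection only assumes $\gamma$ is polynomial (not constant), the whole running time is polynomial, which is precisely the improvement over Corollary~\ref{thm:fptas1}.

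I expect the main obstacle to be bookkeeping the constant in front of the error sum rather than any deep difficulty: one must verify that the single application of Lemma~\ref{lem:minks} inside each Bellman step contributes an aggregate $|\cN_a(s_{k-1})|R_{k} \le \gamma R_k = L_k$ of additional error per level, and that this is exactly the extra $L_k$ already folded into the factor $2$ of Lemma~\ref{lem:opt2} (the doubling there absorbs both the Bellman rounding and the McMinKS rounding), so that the final chaining incurs the clean factor $3$. The one subtlety worth stating explicitly is that Lemma~\ref{lem:dis} must be re-read with the new $L_k$ from Eq.~\raf{eq:newR}; since its proof never uses the specific magnitude of $L_k$ (only the floor inequality $\lfloor x/y\rfloor y \ge x-y$ and that transition probabilities sum to one), it holds unchanged, so no re-derivation is needed. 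The proof therefore reduces to assembling the pieces and performing the telescoping sum that mirrors Eq.~\raf{eq:har}--\raf{eq:har2}.
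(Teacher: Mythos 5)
Your proposal is correct and follows essentially the same route as the paper's proof: chaining Lemma~\ref{lem:opt2} with Lemma~\ref{lem:dis} at $s_0$ to get the $3\sum_{k=0}^{h-1} L_k$ error, cancelling it via the factor-$3$ choice of $L_k$ in Eq.~\raf{eq:newR} and the harmonic-series bound, invoking $v_{\pi^*}(s_0)\ge U_{\max}$, and citing the disjoint-transition feasibility argument. Your explicit running-time accounting and the observation that Lemma~\ref{lem:dis} is insensitive to the magnitude of $L_k$ are finer-grained than the paper's brief statement that $|\cL_k|$ and $|\cR_k|$ are polynomial, but they are elaborations of the same argument, not a different one.
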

\begin{proof}
First, observe that the algorithm maintains a dynamic programming table with minimum execution risk. Lines~\ref{alg:f1} of subroutine {\tt Fetch-Policy} ensures that a feasible solution with such property is constructed. The algorithm  runs in polynomial time as the sizes of $\cL_k$ and $\cR_k$ are polynomial. 
By Lemma~\ref{lem:dis} and Lemma~\ref{lem:opt2}, expanding for $s_0$, and by the definition of $L_k$ and $R_k$, we obtain
	\begin{align}
		\overline v_\pi(s_0)  &\ge \overline v_{\pi^*}(s_0)   - 2 \sum_{k=0}^{h-1} L_{k} \\
        &\ge  v_{\pi^*}(s_0) - \sum_{k=0}^{h-1}L_{k}  - 2 \sum_{k=0}^{h-1} L_{k}
    \end{align}
    Substituting $L_{k}$ obtains,
    \begin{align}
        & = v_{\pi^*}(s_0) - 3 \sum_{k=0}^{h-1} \frac{\epsilon U_{\max}}{3(h-k)(\ln h + 1)} \\
		&= v_{\pi^*}(s_0) - \frac{\epsilon U_{\max}}{\ln h + 1} \sum_{k=0}^{h-1} \frac{1}{(h-k)} 
   \end{align}
   Using the upper bound on the harmonic series $\sum_{n=1}^k \tfrac{1}{n} \le \ln k + 1$ obtains
   \begin{align}
        \overline v_\pi(s_0) &\ge v_{\pi^*}(s_0) - \frac{\epsilon U_{\max}}{\ln h + 1} (\ln h + 1)  \label{eq:harr2}\\
		&\ge (1-\epsilon)\cdot v_{\pi^*}(s_0), \
	\end{align}
 Therefore,
$v_\pi(s_0) \ge  \overline  v_{\pi}(s_0) \ge  (1-\epsilon) v_{\pi^*}(s_0),$
which completes the proof.

\end{proof}
\subsection{FPTAS for (C)C-MDP under Local Transition}
\label{sec:fptas}

\begin{algorithm}[ht]	
	\footnotesize
	\caption{\texttt{DynMDP}$[M, \epsilon]$}\label{alg2}
    ${\tt DP}_{\textsc{Er}}(s_k,\ell_k) \leftarrow \infty$; ${\tt DP}_{\sc \pi}(s_k,\ell_k) \leftarrow \varnothing$,  ${\tt DP}_{\bar \Bell}(s_k,\ell_k) \leftarrow \bzero$, for all $k=0,...,h-1$;  $s_k\in \cS_k$; $\ell_k \in \cL_k$ \\
    ${\tt DP}_{\textsc{Er}}(s_h,0) \leftarrow  r(s_h)$  \\ 
	\For{$k=h-1,...,0$ }{
    	$\cJ_k \leftarrow$ Partition states in $\cS_k$ based on reachability given by Eqn.~\raf{eq:reach}
        
        \For{$J_k\in \cJ_k$}{
        	\For{$\Bell_k=(\ell_k^c)_{s^c_k \in J_k} \in \cL_k^{|J_k|}$}{
		$\big({\tt DP}_{\textsc{Er}}(s^c_k,\ell^c_k), {\tt DP}_{\pi}(s^c_k,\ell^c_k), {\tt DP}_{\bar \Bell}(s^c_k,\ell^c_k) \big)\leftarrow \texttt{mKS-Update}[J_k,\Bell_k, \{\texttt{DP}_{\textsc{Er}}(\cdot,\cdot)\}] $

        	}
        }
	}
	$\pi \leftarrow \texttt{Fetch-Policy}[\texttt{DP}]$

	\Return $\pi$  

\end{algorithm}

\begin{algorithm}[ht] 
	\footnotesize
	\caption{\mbox{\texttt{mKS-Update}\small $\big[J_k,\Bell_k=(\ell_k^c)_{s^c_k \in J_k}, \{\texttt{DP}_{\textsc{Er}}(s,\ell)\}  \big
]$}}\label{alg:update3}
	$\textsc{Er}(s^c_k)\leftarrow \infty$ for $s_k^c \in J_k$\\
	$\textsc{Act} \leftarrow \varnothing$\\
    $\textsc{Alloc} \leftarrow \bzero$ \\
	 \For{$\ba=(a^c)_{s^c_{k} \in J_k} \in \cA^{|J_k|}$}{
		// Find an allocation  that achieves  the minimum execution risk of states in  $J_k$ such that the total utility at each  $s_k^c\in J_k$ is at least $\ell^c_k$\\
         Let $\cN := \bigcup_{s^c_k \in J_k} N_{a^c}(s^c_k) $ \label{alg:mks0}\\
         Let $\cM_i := \cL_{k+1}$ for $i\in \cN$ \\
         Let $w_{i,j} :=  \sum_{s_k^c \in J_k}T(s^c_k, a^c, s^i_{k+1}) \cdot \texttt{DP}_{\textsc{Er}}(s^i_{k+1}, \overline \ell^i_{k+1})$ for all  $j=\overline \ell^i_{k+1} \in \cM_i$ and  $i$ such that $s^i_{k+1} \in N_a(s^c_k)$\\
        \For{$ s^c_k \in J_k$}{
                
                Let $v^c_{i,j} := {T(s^c_k, a^c, s^i_{k+1}) \cdot \overline \ell^i_{k+1}}$ for all  $j=\overline\ell^i_{k+1} \in \cM_i$ and $i$ such that $s^i_{k+1} \in N_a(s^c_k)$\\
                Let $D^c:= \ell^c_k - U(s^c_k, a^c)$  \label{alg:mksl}
        }
        
		$(\overline \Bell^i_{k+1})_{i \in \cN }  \leftarrow$ Solve  \texttt{MMcMinKS}$[\{ (w_{i,j}, v^c_{i,j})_{i\in \cN,j \in  \cM_i}, D^c\}_{s^c_k \in J_k}]$
		$\textsc{Er}_{a^c}(s^c_k, \overline \Bell_{k+1}) := r(s^c_k)+(1-r(s^c_k)) \sum_{s^i_{k+1} \in N_{a^c}(s^c_k)} T(s^c_k, a^c, s^i_{k+1}) \cdot \texttt{DP}_{\textsc{Er}}(s^i_{k+1},\overline \ell^i_{k+1})$  for $s^c_k \in J_k$\\
		\If{$\sum_{s^c_k \in J_k}\textsc{Er}_{a^c}(s^c_k, \overline \Bell_{k+1}) <  \sum_{s^c_k \in J_k}\textsc{Er}(s^c_k)$}{ 
           \For{$s^c_k \in J_k$}{
			$\textsc{Er}(s^c_k) \leftarrow \textsc{Er}_{a^c}(s^c_k, \overline \Bell_{k+1})$\\
			$\textsc{Act}^c \leftarrow a^c$ \\
            $\textsc{Alloc}^c \leftarrow  (\overline\Bell^i_{k+1})_{s^i_{k+1} \in N_{a^c}(s^c_k)}$
	        }
          }
	      
	}
	\Return $(\textsc{Er}(s^c_k), \textsc{Act}^c, \textsc{Alloc}^c)_{s^c_k \in J_k}$
\end{algorithm}

To tackle (C)C-MDP with local transitions, we perform a tree decomposition: a transformation of the MDP graph into a tree where each node in the tree consists of a set of states. The tree nodes define a family of disjoint sets of states  $\cJ_k \subseteq 2^{\cS_k}$ at each time $k$  
 (level in the MDP graph) as 
\begin{align}
\cJ_k &:= \{J_k \subseteq \cS_k \mid \textsc{Reach}(s_k) \cap \textsc{Reach}(s'_k) \neq \varnothing,\notag\\
&\qquad \text{ for any pair } s_k, s'_k \in J_k \}. \label{eq:reach}
\end{align}

According to the local transition assumption,  $|J_k|$ is at most a constant $\psi$, a property that is necessary to obtain  a polynomial-time algorithm. 
Next, we show how to convert the allocation subproblem into a {\em multi-dimensional} version of McMinKS (denoted as MMcMinKS).
MMcMinKS extends definition~\ref{def:ks} allowing the value of each item to be a $|J_k|$-dimensional vector. 
The goal is to compute the total minimum weight such that the total value for each dimension is at least $D^c \in \RR_+$ for $c= \{1,...,|J_k|\}$. More formally, the problem can be defined as an ILP by replacing Cons.~\raf{con} by 
$\sum_{i \in \cN} \sum_{j\in \cM_i} v^c_{i,j} x_{i,j} \ge D^c,$  for $c=1,...,|J_k|$.

The key idea, presented in Alg.~\ref{alg2} (denoted by {\tt DynMDP}), is to operate on clusters of states $J_k$ instead of individual states as in {\tt dis-DynMDP}.  
{\tt mKS-Update} (Alg.~\ref{alg:update3}) provides a reduction from the allocation problem into MMcMinKS. As we have a tree structure, the problem structure remains similar to {\tt dis-DynMDP} except that here we require to solve an instance of multi-dimensional McMinKS. This can be done by a slight modification of {\tt Dyn-MinKS}. The basic idea is to round off the set of possible values to obtain a range, by which we can optimize over in polynomial time using dynamic programming. Thus, we create $|J_k|$ dimensional dynamic programming table {\tt TB}$(j, \rho^1,..., \rho^{|J_k|})$. Since $|J_k|\le \psi$  is a constant, the size of the table is polynomial in the input size.

 \begin{theorem} \label{thm:fptas3}
	Algorithm {\tt DynMDP} is an FPTAS for (C)C-MDP under local transition assumption.
    \vspace{-5pt}
\end{theorem}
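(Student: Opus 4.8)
The plan is to lift the proof of Corollary~\ref{thm:fptas2} from individual states to the clusters $J_k \in \cJ_k$ defined in Eq.~\raf{eq:reach}, keeping the discretization $L_k, R_k$ of Eq.~\raf{eq:newR}. Three ingredients need adapting: a multi-dimensional version of the knapsack guarantee (Lemma~\ref{lem:minks}); a cluster-level version of the optimality bound (Lemma~\ref{lem:opt2}); and the feasibility argument, which is the single place where local transition genuinely departs from disjoint transition.

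For the first, I would modify {\tt Dyn-MinKS} to maintain a $|J_k|$-dimensional table $\texttt{TB}(i, \rho^1, \ldots, \rho^{|J_k|})$ and round each value component $v^c_{i,j}$ down to a multiple of $R_k$. Because a single variable $x_{i,j}$ fixes the value $\overline\ell^i_{k+1}$ of each (possibly shared) successor, the table records the least total weight meeting the demand $\rho^c$ simultaneously in every dimension $c$, so the guarantee is the coordinate-wise lift of Lemma~\ref{lem:minks}: the returned allocation has total weight no larger than the optimum while in each dimension its value is at least the optimum minus $|\cN| R_k$, by the same downward-rounding argument. With this in hand the induction of Lemma~\ref{lem:opt2} goes through at the cluster level: enumerating joint vectors $\Bell_k \in \cL_k^{|J_k|}$ and joint actions $\ba \in \cA^{|J_k|}$ (Alg.~\ref{alg:update3}), the per-step error $|\cN| R_k$ is absorbed by taking $R_k = L_k/\gamma$ with $\gamma$ a polynomial upper bound on $|\cN|$, reproducing the $-L_{k-1}$ term of Eq.~\raf{eq:dround1}. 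Chaining with Lemma~\ref{lem:dis} (which is indifferent to clustering) and the harmonic-series estimate then yields $\overline v_\pi(s_0) \ge v_{\pi^*}(s_0) - 3\sum_{k=0}^{h-1} L_k \ge (1-\epsilon) v_{\pi^*}(s_0)$, hence $v_\pi(s_0)\ge(1-\epsilon)v_{\pi^*}(s_0)$ exactly as in Corollary~\ref{thm:fptas2}.

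I expect feasibility to be the main obstacle, since the proof of Lemma~\ref{lem:opt} explicitly fails once a successor has several parents. The resolution rests on two structural facts about $\cJ_k$. First, any two states in the same cluster are reached from states of a single parent cluster: if $s_k, s'_k$ share a descendant then so do their parents, placing those parents in one cluster, so the clusters across levels form a tree. Second, states in distinct clusters have disjoint reachable sets, so no successor is shared across clusters. Together these let me run the feasibility induction up the cluster tree as in Lemma~\ref{lem:opt}: within a cluster every shared successor receives one value $\overline\ell^i_{k+1}$ from the single MMcMinKS variable, so {\tt Fetch-Policy} backtracks one joint action per cluster and the extracted policy outputs a unique action per state. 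The delicate point, which I would spell out, is that {\tt mKS-Update} minimizes the cluster-aggregate execution risk for each fixed $\Bell_k$; I would argue that the configuration $\pi^*$ induces on the cluster is one feasible candidate of this minimization, so the recorded aggregate risk is no larger than $\pi^*$'s, and propagating this bound up the tree gives $\texttt{DP}_{\textsc{Er}}(s_0,\ell_0)\le \textsc{Er}_{\pi^*}(s_0)\le\Delta$ at the singleton root cluster $J_0=\{s_0\}$.

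Finally, for the running time, $|\cL_k|$ and $|\cR_k|$ are polynomial in the input and $1/\epsilon$; for each level and cluster the algorithm enumerates at most $|\cL_k|^{\psi}$ value vectors and $|\cA|^{\psi}$ action vectors and solves MMcMinKS with a $\psi$-dimensional table of size at most $|\cR_k|^{\psi}$. Since $\psi$ is constant, every factor is polynomial, so {\tt DynMDP} runs in time polynomial in the input and in $1/\epsilon$, and with the approximation bound above it is an FPTAS.
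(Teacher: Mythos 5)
Your proposal is correct and takes essentially the same route as the paper, whose entire proof of Theorem~\ref{thm:fptas3} is the one-sentence plan you executed: reuse the argument of Corollary~\ref{thm:fptas2} with a modification of Lemma~\ref{lem:minks} to a higher-dimensional dynamic programming table. In fact you supply details the paper leaves implicit --- the tree structure of the clusters $\cJ_k$, the per-cluster feasibility/backtracking argument replacing that of Lemma~\ref{lem:opt}, the rescaling of $R_k$ so the $|\cN|R_k$ rounding error is still absorbed into $L_{k-1}$, and the $|\cL_k|^{\psi}$, $|\cA|^{\psi}$, $|\cR_k|^{\psi}$ runtime accounting --- so your write-up is strictly more complete than the paper's.
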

A proof of the theorem can be obtained using that of Corollary~\ref{thm:fptas2} with a slight modification of Lemma~\ref{lem:minks} to account for higher dimensional dynamic programming table.

\vspace{-5pt}
\section{Conclusion}
This work provides the first fully polynomial-time approximation scheme for a class of constrained MDP under local transition. Since the problem is NP-Hard, our algorithm is the best polynomial-time approximation algorithm attainable in theory. We believe our results provide fundamental insights into the problem and can lead to the future development of algorithms and faster heuristics for (C)C-MDP and constrained reinforcement learning.

\bibliographystyle{abbrv}
\bibliography{reference}

\begin{thebibliography}{10}

\bibitem{ahmadi2021constrained}
M.~Ahmadi, U.~Rosolia, M.~D. Ingham, R.~M. Murray, and A.~D. Ames.
\newblock Constrained risk-averse markov decision processes.
\newblock In {\em The 35th AAAI Conference on Artificial Intelligence
  (AAAI-21)}, 2021.

\bibitem{altman1999constrained}
E.~Altman.
\newblock {\em Constrained Markov decision processes}, volume~7.
\newblock CRC Press, 1999.

\bibitem{khonji2021dual}
R.~Alyassi and M.~Khonji.
\newblock Dual formulation for chance constrained stochastic shortest path with
  application to autonomous vehicle behavior planning.
\newblock In {\em 2021 60th IEEE Conference on Decision and Control (CDC)},
  pages 4486--4492. IEEE, 2021.

\bibitem{bentz2016note}
C.~Bentz and P.~L. Bodic.
\newblock A note on" approximation schemes for a subclass of subset selection
  problems", and a faster fptas for the minimum knapsack problem.
\newblock {\em arXiv preprint arXiv:1607.07950}, 2016.

\bibitem{bertsekas1991analysis}
D.~P. Bertsekas and J.~N. Tsitsiklis.
\newblock An analysis of stochastic shortest path problems.
\newblock {\em Mathematics of Operations Research}, 16(3):580--595, 1991.

\bibitem{bonet2003labeled}
B.~Bonet and H.~Geffner.
\newblock Labeled rtdp: Improving the convergence of real-time dynamic
  programming.
\newblock In {\em ICAPS}, volume~3, pages 12--21, 2003.

\bibitem{chau2016truthful}
C.-K. Chau, K.~Elbassioni, and M.~Khonji.
\newblock Truthful mechanisms for combinatorial allocation of electric power in
  alternating current electric systems for smart grid.
\newblock {\em ACM Transactions on Economics and Computation (TEAC)},
  5(1):1--29, 2016.

\bibitem{de2017bounding}
F.~De~Nijs, E.~Walraven, M.~de~Weerdt, and M.~Spaan.
\newblock Bounding the probability of resource constraint violations in
  multi-agent mdps.
\newblock In {\em Proceedings of the AAAI Conference on Artificial
  Intelligence}, volume~31, 2017.

\bibitem{de2021constrained}
F.~de~Nijs, E.~Walraven, M.~De~Weerdt, and M.~Spaan.
\newblock Constrained multiagent markov decision processes: a taxonomy of
  problems and algorithms.
\newblock {\em Journal of Artificial Intelligence Research}, 70:955--1001,
  2021.

\bibitem{d1963probabilistic}
F.~d'Epenoux.
\newblock A probabilistic production and inventory problem.
\newblock {\em Management Science}, 10(1):98--108, 1963.

\bibitem{ding2020natural}
D.~Ding, K.~Zhang, T.~Basar, and M.~Jovanovic.
\newblock Natural policy gradient primal-dual method for constrained markov
  decision processes.
\newblock {\em Advances in Neural Information Processing Systems},
  33:8378--8390, 2020.

\bibitem{dolgov2003approximating}
D.~A. Dolgov and E.~H. Durfee.
\newblock Approximating optimal policies for agents with limited execution
  resources.
\newblock In {\em IJCAI}, pages 1107--1112, 2003.

\bibitem{feinberg2000constrained}
E.~A. Feinberg.
\newblock Constrained discounted markov decision processes and hamiltonian
  cycles.
\newblock {\em Mathematics of Operations Research}, 25(1):130--140, 2000.

\bibitem{feinberg1996constrained}
E.~A. Feinberg and A.~Shwartz.
\newblock Constrained discounted dynamic programming.
\newblock {\em Mathematics of Operations Research}, 21(4):922--945, 1996.

\bibitem{freire2019exact}
V.~Freire, K.~V. Delgado, and W.~A.~S. Reis.
\newblock An exact algorithm to make a trade-off between cost and probability
  in ssps.
\newblock In {\em Proceedings of the International Conference on Automated
  Planning and Scheduling}, volume~29, pages 146--154, 2019.

\bibitem{hansen2001lao}
E.~A. Hansen and S.~Zilberstein.
\newblock Lao*: A heuristic search algorithm that finds solutions with loops.
\newblock {\em Artificial Intelligence}, 129(1-2):35--62, 2001.

\bibitem{sungkweon2021ccssp}
S.~Hong, S.~U. Lee, X.~Huang, M.~Khonji, R.~Alyassi, and B.~Williams.
\newblock An anytime algorithm for chance constrained stochastic shortest path
  problems and its application to aircraft routing.
\newblock In {\em ICRA}. https://bit.ly/2Pb9LPr/, 2021.

\bibitem{howard1960dynamic}
R.~A. Howard.
\newblock Dynamic programming and markov processes.
\newblock 1960.

\bibitem{huang2019online}
X.~Huang, S.~Hong, A.~Hofmann, and B.~C. Williams.
\newblock Online risk-bounded motion planning for autonomous vehicles in
  dynamic environments.
\newblock In {\em Proceedings of the International Conference on Automated
  Planning and Scheduling}, volume~29, pages 214--222, 2019.

\bibitem{kellerer2004knapsack}
H.~Kellerer, U.~Pferschy, and D.~Pisinger.
\newblock {\em Knapsack Problems}.
\newblock Springer, 2004.

\bibitem{khonji2019approximability}
M.~Khonji, A.~Jasour, and B.~Williams.
\newblock Approximability of constant-horizon constrained pomdp.
\newblock In {\em IJCAI}, pages 5583--5590, 2019.

\bibitem{ono2015chance}
M.~Ono, M.~Pavone, Y.~Kuwata, and J.~Balaram.
\newblock Chance-constrained dynamic programming with application to risk-aware
  robotic space exploration.
\newblock {\em Autonomous Robots}, 39(4):555--571, 2015.

\bibitem{pruhs2007approximation}
K.~Pruhs and G.~J. Woeginger.
\newblock Approximation schemes for a class of subset selection problems.
\newblock {\em Theoretical Computer Science}, 382(2):151--156, 2007.

\bibitem{santana2016rao}
P.~Santana, S.~Thi{\'e}baux, and B.~Williams.
\newblock Rao*: an algorithm for chance constrained pomdps.
\newblock In {\em Proc. AAAI Conference on Artificial Intelligence}, 2016.

\bibitem{trevizan2016heuristic}
F.~Trevizan, S.~Thi{\'e}baux, P.~Santana, and B.~Williams.
\newblock Heuristic search in dual space for constrained stochastic shortest
  path problems.
\newblock In {\em Twenty-Sixth International Conference on Automated Planning
  and Scheduling}, 2016.

\bibitem{Vazirani10}
V.~V. Vazirani.
\newblock {\em Approximation algorithms}.
\newblock Springer Science \& Business Media, 2013.

\bibitem{wei2014behavioral}
J.~Wei, J.~M. Snider, T.~Gu, J.~M. Dolan, and B.~Litkouhi.
\newblock A behavioral planning framework for autonomous driving.
\newblock In {\em 2014 IEEE Intelligent Vehicles Symposium Proceedings}, pages
  458--464. IEEE, 2014.

\end{thebibliography}

\end{document}